\documentclass[11pt]{article}

\usepackage[preprint]{acl}

\usepackage{times}
\usepackage{latexsym}
\usepackage[T1]{fontenc}
\usepackage[utf8]{inputenc}
\usepackage{microtype}
\usepackage{inconsolata}
\usepackage{graphicx}
\usepackage{booktabs}
\usepackage{multirow}
\usepackage{amsmath,amssymb}
\usepackage{amsthm}
\newtheorem{definition}{Definition}
\newtheorem{proposition}{Proposition}

\usepackage{algorithm}
\usepackage{algpseudocode}
\usepackage{tikz}
\usetikzlibrary{positioning,arrows.meta,fit,backgrounds,calc}

\title{Structural Baselines Matter for Agent Failure\\Detection and Attribution}

\author{
  Ramesh Arvind Naagarajan\textsuperscript{1}
  \quad
  Stefan Streif\textsuperscript{1,2}
  \\
  \textsuperscript{1}Professorship for Automatic Control and System Dynamics
  \\
  Chemnitz University of Technology, Chemnitz, Germany
  \\
  \textsuperscript{2}Department of Bioresources
  \\
  Fraunhofer Institute for Molecular Biology and Applied Ecology
  \\
  Giessen, Germany
  \\
  Corresponding author:
  \texttt{stefan.streif@etit.tu-chemnitz.de}
}

\begin{document}
\maketitle

\begin{abstract}
An LLM agent using tools can fail silently. It stops before the task is done, raises no error, and answers fluently. Two kinds of monitor address this, one deciding whether a run failed and the other naming the step where failure became unavoidable. Much of the reported performance on both tasks comes from regularities that need no model. Failing runs are longer, and the step labelled decisive is usually the last. A rule that always names the last step is correct $67\%$ to $96\%$ of the time here, without reading the run at all. That ceiling reflects how the labels were written, not how agents break. On failures whose decisive step replay can confirm, it falls to near chance. Reporting gain over that rule therefore changes what a score means. Under that correction, methods tested here gain nothing on give-up failures, while several published systems keep a clear advantage on other splits. For detection, the agent's internal states beat transcript length and text similarity on the agent benchmark, and hold up where the text baseline does not. Precision stays low when failures are rare, so such a monitor should rank runs for review, not replace it.
\end{abstract}

\section{Introduction}
\label{sec:intro}

Large language models now act as agents that call tools, read and write external state, and coordinate with other agents inside software systems. Many deployments run them without human review on every task. A central reliability concern in this setting is the \emph{silent failure}. The agent leaves the task incomplete but raises no exception, returns no error code, and shows no latency spike. Such failures are hard to catch because the output can look normal and no unsafe action has to occur. Studies of multi-agent systems report high task failure rates. Many of these failures are specification and coordination breakdowns that leave no trace for standard monitoring tools to record \citep{cemri2025mast,whoandwhen2025}. A monitor must decide whether a run that looks complete actually succeeded and, if not, where it went wrong.

Attribution methods developed so far do not match this setting well. Most of them assume labelled data, with ground-truth task outcomes and per-step annotations of which action was wrong. Many are tested on partial traces rather than on full production-shaped trajectories \citep{whoandwhen2025,elephant2026,rethinkattr2026}. The task remains hard even with those labels. Reported step accuracies on the Who\&When benchmark start in the single and low double digits. Later methods increase these scores, but there is disagreement on how much of the reported gain reflects genuine improvements \citep{whoandwhen2025,agentracer2025,raffles2026}.

Silent-failure detection faces a related set of constraints. Supervised detectors that target these failures also rely on labelled trajectories. They reach high area under the ROC curve (AUROC) only when many failed runs are available for training \citep{silentfail2025,trajguard2026}. A deployed monitor typically has none of these labels. It must judge a run from signals the system already records, such as per-step activations, tool returns, and text transcripts.

The setting considered here removes labels from the monitor. The monitor sees a full trajectory but no task label and no per-step annotation. From that trajectory it computes two quantities. One is a score for whether the run failed. The other is a guess at the step where failure became unavoidable. Section~\ref{sec:bench} defines this setting, the monitor, and the $\tau$-bench corpus used to study it.

Both outputs require a reference point before they are useful. Failing runs differ from successful ones in ways that do not need a model. They tend to be longer. Transcript length alone already separates many of them. The labelled step also tends to fall at the end of the run. Most failures in this corpus are omissions, where the agent stops and hands the task back rather than taking a harmful action. A rule that always names the last step is therefore right most of the time here, without reading any content. These two regularities play the role here that hypothesis-only artifacts play in natural language inference \citep{gururangan2018annotation}.

Both monitoring scores are therefore reported as gain over the best matched content-blind rule rather than as raw accuracy, which is the skill-score form long used in forecasting \citep{brier1950}, and a released script computes that gain from the gold step indices a benchmark already publishes. Applying the standard to $14$ splits across five benchmarks and to $30$ method--baseline pairs separates the reported skill that survives the correction from the skill that does not. The agent corpus behind those results, SILENT-Bench, is built on $\tau$-bench \citep{taubench2024} and fixes the decisive step by counterfactual replay rather than by human annotation, which also makes it possible to report how often a failure has a decisive step at all. The detector used throughout is label-free, scoring a run by its distance from a reference built only from trusted successful runs, and it carries the comparison of internal activations with text embeddings under a single protocol.
\begin{figure}[tb]
\centering
\begin{tikzpicture}[
  font=\scriptsize,
  box/.style={draw,rounded corners=2pt,align=center,inner sep=2.5pt,minimum height=6mm},
  sig/.style={box,fill=blue!6,text width=17mm,minimum height=11mm,align=center},
  outbox/.style={box,fill=green!8,text width=24mm,minimum height=6mm,align=center},
  >={Latex[length=2mm]}]
\node[box,fill=gray!8,text width=58mm] (traj) {Agent trajectory (assistant steps $+$ tool returns $+$ per-step activations)};
\node[box,fill=red!6,text width=58mm,below=1.5mm of traj] (base) {Structural baselines, no model: transcript length $0.64$, step count $0.72$};
\node[sig,below=3mm of base.south west,anchor=north west,fill=blue!14] (intern) {Internal\\ anomaly $\max_t z_t$\\ {\scriptsize AUROC $\mathbf{0.79}$}};
\node[sig,below=3mm of base.south,anchor=north] (tool) {Tool-error\\ grounding\\ {\scriptsize AUROC $0.68$\textsuperscript{$\ast$}}};
\node[sig,below=3mm of base.south east,anchor=north east,fill=gray!8] (text) {Text\\ embedding\\ {\scriptsize AUROC $0.55$}};
\node[box,fill=orange!10,below=2.5mm of tool,text width=58mm] (cal) {Trusted-only calibration (leave-one-out over successful runs)};
\node[outbox,below=2.5mm of cal.south west,anchor=north west] (det) {Detection score};
\node[outbox,below=2.5mm of cal.south east,anchor=north east] (attr) {Decisive step};
\draw[->] (base.south) -- (tool.north);
\draw[->] (base.south west) -- (intern.north);
\draw[->] (base.south east) -- (text.north);
\draw[->] (intern.south) -- (cal.north west);
\draw[->] (tool.south) -- (cal.north);
\draw[->] (text.south) -- (cal.north east);
\draw[->] (cal.south west) -- (det.north);
\draw[->] (cal.south east) -- (attr.north);
\end{tikzpicture}
\caption{The label-free pipeline, with detection AUROC on the regenerated corpus, rounded to two digits from Table~\ref{tab:detection}. Calibration uses trusted successful runs only, leave-one-out, so no failure label enters the monitor. \textsuperscript{$\ast$}tool grounding is measured on the greedy corpus.}
\label{fig:pipeline}
\end{figure}

\section{Related Work}
\label{sec:related}

\subsection{Step attribution in agent traces}

Who\&When introduces failed multi-agent traces with gold mistake steps and mistake agents, and reports step accuracies of $0.255$ and $0.088$ on its two splits \citep{whoandwhen2025}. AgenTracer trains a specialist tracer and finds that an $8$ billion parameter model reaches higher attribution accuracy than larger models on this benchmark \citep{agentracer2025}. Later benchmarks note that many attribution datasets assume one clear root cause and supply only partial traces. They also show that seeing the full trajectory can change attribution performance \citep{elephant2026,rethinkattr2026}.

Most of these methods rely on gold labels for both training and evaluation. More recent work follows the same pattern and adds components such as hypothesis verification, temporal modelling, or intervention-driven replay \citep{verifymas2026,stepfinder2026,dover2025}. Other studies argue that a single labelled step is often ill-posed \citep{whoandwhenpro2026,rethinkattr2026}. The monitor studied here operates without labels, and attribution scores are reported as gain over a positional baseline. That gain is a skill score, which expresses accuracy relative to a reference rather than on its own \citep{brier1950}. Earlier work on annotation artifacts found that the confound lay in the input text \citep{gururangan2018annotation}. For step attribution, the main confound instead lies in the position of the gold label.

\subsection{Silent failure detection and trajectory anomalies}

Some methods train supervised detectors on labelled trajectories and report high AUROC when enough failure labels are available \citep{silentfail2025,trajguard2026}. These systems need labels for both training and model selection. Other methods study unsupervised trajectory anomalies. They flag cycles or graph anomalies in multi-agent executions \citep{georgecycle2025,sentinelagent2025} or diagnose failures from execution traces \citep{trajad2026,agentrx2026}. Concurrent work characterising false success in agent runs reports that no configuration among five judges and five prompt strategies exceeds AUROC $0.65$ on $\tau^2$-bench \citep{falsesuccess2026}, which is a broader version of the judge result reported here from a single prompt on a different corpus. The goal in both cases is to catch failures that change the shape of a run. An omission barely changes that shape, since it looks like ordinary text and ordinary tool use, and Section~\ref{sec:exp-det} measures how far detectors built on repetition carry to it. The detector studied here instead compares each step with a reference built from successful runs alone.

\subsection{Activation geometry and out-of-distribution detection}

Directions in representation space can encode truthfulness and other behaviours \citep{zou2023repe,marks2024geometry}. Distance-based detectors such as Mahalanobis distance to class means or one-class support vector methods flag inputs outside a reference distribution \citep{lee2018mahalanobis,podolskiy2021revisiting,tax2004svdd,ruff2018deepsvdd}. Activations taken before generation can also predict task success when used with supervised probes \citep{agentprobe2026}. The detector here follows this distance-based line but adapts it to trajectories and to a setting with no labels. The reference distribution comes from successful runs only. A run is scored by the largest standardised distance among its steps.

\section{The label-free setting and SILENT-Bench}
\label{sec:bench}

A \emph{silent failure} is a run where the agent misses the task goal but produces no error or crash. A silent failure is an \emph{omission} when the agent simply stops, leaving the goal still reachable, and a \emph{harmful action} when some step makes the goal unreachable. The \emph{decisive step} is that step for a harmful action. For an omission there is no such step, and the label substitutes the give-up point by convention. For each run, the monitor returns a detection score and a guess at the step where the run went wrong. Trusted runs are successful runs that a deployment can recognise from downstream outcomes. The monitor is fitted, and its alert threshold is set, on those runs alone. Task labels enter only at evaluation time.

The corpus is built on top of $\tau$-bench, where a tool-using agent serves a simulated customer through a fixed airline or retail interface \citep{taubench2024}. Success is judged by the resulting database state, not by the conversation text. This makes the label deterministic. It also allows replay of a run to find the decisive step without ever showing that step to the monitor. The agent's activation is recorded at each turn and is taken from its own forward pass, so the monitor adds no extra inference calls. Appendix~\ref{app:bench} gives the splits, the model versions, and the release archive, and Appendix~\ref{app:vintage} lists which corpus supports which claim.

Replay computes the decisive step by applying the gold actions to a fresh database to get a target state, then finding the earliest agent write after which those actions can no longer reach it. Where no such write exists, the give-up point stands in. That is the hand-off to a human, or the last step where the agent used a tool. Replay discovers the first label, while the second is a convention that sits at the end of the run by construction. Figure~\ref{fig:failtypes} shows one of each from real runs, and Appendix~\ref{app:case} gives three worked examples. Both kinds end with no error and a fluent reply.

Most failures in this corpus are omissions. When every prefix of every failing run is checked, only $66$ of $596$ failures ($11.1\%$) have a decisive step that replay can verify (Appendix~\ref{app:gold}). The fraction varies by domain, from about $2\%$ on retail to nearly $40\%$ on airline. This mix makes distance from normal operation a usable detection signal and places the labelled step near the end of the run, which the positional baselines later exploit. The code and a tool that applies the positional correction to a new benchmark are released anonymously.\footnote{\url{https://anonymous.4open.science/r/acl_submission-5864}} The scoring code, the precomputed per-step and per-run scores, and the trajectory corpora are included with this submission as software and data, respectively; all materials will be made publicly available in the camera-ready version.

\section{Method}
\label{sec:method}

\subsection{Detection}
\label{sec:method-det}

The monitor first builds a picture of what a normal successful run looks like, then measures how far a new run sits from that picture. Only successful runs contribute to this reference, so the method uses no failure labels. Figure~\ref{fig:pipeline} shows the pipeline, and Appendix~\ref{app:algo} gives the full procedure.

Let $h_{t}$ be the activation of the agent at step $t$, read at the last prompt token before it acts. A single token is used rather than a pooled representation because that position is the one decision point every step shares, regardless of what the step then emits. Pooling would mix this decision state with the generated text that transcript baselines already cover. The reference is the mean $\mu$ and standard deviation $\sigma$ of these activations, taken over trusted runs only. The score of a step is its distance from that mean, measured in standard deviations and averaged over the $d$ dimensions, so scores stay comparable across models of different width (here $d$ is $3584$ for Qwen-7B, $4096$ for Llama-8B, $4608$ for Gemma-27B, $5120$ for Mistral-24B and Qwen-32B, and $8192$ for Llama-70B):
\[
z_{t} = \lVert (h_{t} - \mu) / \sigma \rVert_{2} / \sqrt{d}.
\]
This score uses all coordinates rather than a single direction, which is why it outperforms a single-direction variant (Section~\ref{sec:exp-det}).

A run is only as normal as its least normal step, so its score is $\max_{t} z_{t}$. An ablation shows that it is not the best pooling rule on most datasets, which makes the reported numbers conservative (Appendix~\ref{app:aggregator}). Two further signals stand in when activations are unavailable. One checks whether any tool call returned an error. The other measures how far the text of the run sits from trusted text, encoding the assistant turns with \texttt{all-MiniLM-L6-v2} and scoring cosine distance from the trusted centroid. Both are distances from trusted traffic rather than outputs of trained classifiers, since a classifier would need failure labels.

Each signal $s$ is standardised on trusted runs, and the results are added with equal weight:
\begin{equation}
\label{eq:detect}
D(\tau) = \sum_{s} \frac{s - \bar s_{\text{ben}}}{\hat\sigma_{\text{ben}}}.
\end{equation}
The sum runs over whichever signals a deployment has, and every signal is oriented so that larger means more anomalous. A larger internal distance, a tool call that returned an error, and a larger distance of the run's text from trusted text all raise $D(\tau)$. The weights are kept equal because tuning them would need labelled failures, and Section~\ref{sec:exp-det} shows that tuning adds little. A run is flagged when $D(\tau)$ exceeds the $(1-\alpha)$ quantile of the trusted scores, so $\alpha$ is the fraction of good runs that the monitor is allowed to flag. The main results use $\alpha=10\%$. Appendix~\ref{app:oppoint} gives the full curves. Every fitted quantity, including the threshold, depends on trusted runs alone. This is the sense in which the detector is label-free, stated and proved in Appendix~\ref{app:labelfree}.

\subsection{Attribution}
\label{sec:method-attr}

For attribution, the monitor must name a single step. The internal localizer chooses the step with the largest $z_t$. A tool localizer chooses the first step whose tool call returns an error. Both localizers are also compared against a GPT-4o judge, which sees the numbered run with tool names and returns a label \citep{openai2024gpt4o}.

If the labelled step is usually last, always guessing the last step scores well while reading no content, so three index-only rules serve as references, always-first, always-middle, and always-last, plus a uniform-random rule. A localizer counts as improved only when it beats the best of them.

\section{Experiments}
\label{sec:exp}

\subsection{Applying the correction to a new benchmark}
\label{sec:recipe}

The correction uses only the gold step indices that a benchmark already publishes, so it can apply to any step-attribution benchmark in four steps. First, for each run of length $n$, score three content-blind rules that predict steps $0$, $\text{round}((n-1)/2)$, and $n-1$, and also a uniform-random rule with accuracy $\mathbb{E}[1/n]$. Second, pick the best of the three positional rules on a random half of the runs and score it on the other half, averaged over many splits, removing the bias of choosing a rule on the data that judges it. 

Third, report the \emph{positional lift} $L = \text{Acc}(f) - A_{\text{pos}}$ for the method $f$ under test, where $A_{\text{pos}}$ is the accuracy of the best positional rule, together with a bootstrap interval over runs and a paired permutation test against the same positional rule. A content-blind rule scores $L=0$ by construction, so only a method that reads the run can stay positive. Fourth, when several methods or splits are reported together, apply the correction across them. A method earns credit only when $L$ stays positive after that correction. A script is released alongside the corpus that takes gold indices and predictions and returns $L$, its interval, and the test outcome.

\subsection{Detection}
\label{sec:exp-det}

Internal activations detect failures better than the structural and text baselines on this corpus. The test set is the regenerated corpus of Section~\ref{sec:bench}, with $660$ runs and $64$ trusted successes. Three tests were specified in advance, not revised afterwards, and Holm-corrected. All other analyses are exploratory (Appendix~\ref{app:stats}). A trusted run is never scored against a reference that contains it, to avoid inflating AUROC.

Transcript-length and step-count baselines are already strong and unstable. Pooled over domains they reach AUROC $0.640$ and $0.716$ (Table~\ref{tab:detection}), but on airline failing agents work longer before giving up ($0.740$) while on retail failing runs are slightly shorter ($0.456$). Each baseline is therefore fitted per domain, so that a detector does not look strong in one domain and weak in another for reasons unrelated to generalisation.

The internal signal remains strong under these checks, and the text embedding does not. The internal detector reaches AUROC $0.787$ with interval $[0.708, 0.857]$. This is higher than transcript length by $0.147$ and higher than the Sentence-BERT transcript embedding (SBERT) by $0.236$, with DeLong $p<0.001$ for both comparisons. A two-one-sided test rejects equivalence at both margins. Residualising on length leaves the internal detector at AUROC $0.679$ and SBERT at $0.543$.\footnote{Residualisation regresses each score on log transcript length by ordinary least squares over all runs and recomputes AUROC on the residuals.} The ordering is the same in both domains. On airline, the internal detector reaches AUROC $0.783$ and SBERT reaches $0.694$ ($p{=}0.044$). On retail, they reach $0.794$ and $0.612$ respectively ($p{=}0.038$).

The internal signal is read at the final residual layer, which is the strongest layer in a sweep across layers. The gap to the text baselines holds at that layer and across the sweep (Appendix~\ref{app:layer}).

Cycle and graph anomaly detectors designed for agent trajectories also score below the internal detector (Table~\ref{tab:detection}, DeLong $p<0.001$). An omission does not need to repeat anything, so these baselines have less to detect (Appendix~\ref{app:priorbaselines}). Deployments most often monitor an agent by asking a language model whether the run succeeded, and that does not help here either. A zero-shot Mistral-24B judge calls $99\%$ of runs failures, so its verdict is close to constant and says little on its own. A fairer test is the probability it assigns to that verdict, which separates the runs over $642$ distinct values. Even under that scoring the judge reaches AUROC $0.433$ with interval $[0.358, 0.501]$, so it does not rank runs above chance. Success here is a property of the final database state, and a successful run can end in the same polite hand-off as an omission (Appendix~\ref{app:judgedet}).

SBERT's pooled AUROC of $0.551$ is close to chance, yet the same signal reaches $0.805$ on the greedy corpus whose user side is written by GPT-4o. Under identical greedy decoding, swapping that simulator for a local open model moves the text embedding from $0.805$ to $0.547$ while the internal signal moves from $0.781$ to $0.810$ (Appendix~\ref{app:layer}). A text monitor's strength is therefore partly a property of which model writes the other half of the transcript, a dependence any evaluation on simulated users inherits. The internal signal shows no such dependence.

Many SILENT-Bench failures end with the agent explicitly handing the task to a human, so a fair objection is that the corpus mostly measures give-up detection. A keyword detector that fires on give-up phrasing reaches only $0.594$ (Table~\ref{tab:detection}), so the hand-off text alone does not explain detection performance. The truly silent subset goes further and drops every run that raises a tool error. Pooling the greedy and regenerated runs leaves $73$ such failures against $62$ successes. There the internal signal stays above chance at AUROC $0.624$ with interval $[0.523, 0.720]$, while SBERT sits at $0.582$ with an interval that includes chance (Appendix~\ref{app:layer}). For failures with no surface signal at all, that subset is the honest headline.

A label-tuned fusion of the same three signals reaches AUROC $0.862$ against $0.859$ for the equal-weight label-free variant, so failure labels add almost nothing here. A supervised probe on the activations also scores below the unsupervised distance (Appendix~\ref{app:skyline}). The leave-one-out fusion in Table~\ref{tab:detection} is $0.813$, the value a deployment should expect.

\begin{table}[tb]
\centering
\small
\resizebox{\columnwidth}{!}{%
\begin{tabular}{lccccc}
\toprule
Label-free signal & pooled & 95\% CI & resid. & airline & retail \\
\midrule
\multicolumn{6}{l}{\emph{structural baselines (no calibration)}} \\
Transcript length  & $0.640$ & $[.564,.723]$ & -- & $0.740$ & $0.456$ \\
Step count         & $0.716$ & $[.652,.777]$ & -- & $0.769$ & $0.585$ \\
Give-up keyword$^\ast$    & $0.594$ & $[.564,.627]$ & -- & -- & -- \\
\midrule
\multicolumn{6}{l}{\emph{prior trajectory anomaly detectors}} \\
Cycle detection, hybrid & $0.670$ & $[.592,.750]$ & -- & -- & -- \\
Graph rarity vs trusted & $0.619$ & $[.565,.670]$ & -- & -- & -- \\
\midrule
\multicolumn{6}{l}{\emph{zero-shot LLM judge (no calibration)}} \\
Judge, Mistral-24B$^\dagger$ & $0.433$ & $[.358,.501]$ & -- & $0.417$ & $0.521$ \\
\midrule
\multicolumn{6}{l}{\emph{calibrated signals (leave-one-out)}} \\
Tool-error grounding$^\ast$ & $0.684$ & $[.563,.805]$ & -- & -- & -- \\
Text, SBERT        & $0.551$ & $[.480,.624]$ & $0.543$ & $0.694$ & $0.612$ \\
Internal, RMS dist.& $\mathbf{0.787}$ & $[.708,.857]$ & $0.679$ & $0.783$ & $0.794$ \\
\bottomrule
\end{tabular}}
\caption{Detection AUROC, regenerated $\tau$-bench corpus ($660$ trajectories, $64$ trusted successes). \emph{resid.} residualises on log transcript length. $^\ast$scored on the greedy corpus ($165$ runs), the only vintage with a tool oracle. Every other row is the regenerated corpus. $^\dagger$scored by the probability the judge assigns to its own failure verdict, since its emitted verdict is near-constant (Appendix~\ref{app:judgedet}).}
\label{tab:detection}
\end{table}

\subsection{Detection on further tasks and families}
\label{sec:exp-gsm8k}

Only Llama-3.1 succeeds often enough on $\tau$-bench to calibrate a reference, and sampling more runs does not change this (Appendix~\ref{app:crossfamily}), so the family comparison moves to tasks other families can complete.

The same label-free protocol is run on GSM8K math problem solving \citep{gsm8k} with a calculator tool. Success is a correct answer, and a silent failure is a confident wrong answer. GSM8K is easier than $\tau$-bench, so every family produces enough successful runs to build a trusted reference (Llama-3.1-8B $29\%$, Qwen2.5-7B $63\%$, Mistral-Small-24B $78\%$, Qwen2.5-32B $85\%$).

Transcript length behaves similarly across tasks. It reaches AUROC between $0.46$ and $0.82$ over eleven test sets, for both give-ups and wrong answers, and under both the GPT-4o and the fully open user simulator.

The earlier tie between internal activations and text embeddings was due to limited data. On the original corpus, with $21$ trusted successes, the two scores differed by $0.024$, and their intervals were about $0.2$ wide. The largest arithmetic test set separates them in the same direction. On GSM8K with Qwen2.5-7B, which has $852$ trusted successes, the internal signal reaches AUROC $0.733$ and the text embedding reaches $0.579$. On $\tau$-bench both gaps are significant at $p<0.001$ and survive length residualisation.

Two further arms test whether this pattern holds at larger scale, where successful runs are more common. Qwen2.5-32B on GSM8K, with $256$ trusted successes, gives an internal AUROC of $0.809$ and a text-embedding AUROC of $0.632$ ($p{=}0.007$). Mistral-Small-24B on FinQA, with $97$ trusted successes, gives $0.760$ for the internal signal and $0.673$ for the text embedding ($p{=}0.048$). The internal signal therefore leads on six of the seven arithmetic arms in Table~\ref{tab:general}. Trusted-set size is not the only factor. GSM8K with Mistral-Small-24B also has a large trusted set ($233$ successes), yet the internal signal sits slightly below the text embedding, $0.63$ against $0.67$ ($p{=}0.314$). A small trusted set can hide a gap. A large one usually, but not always, reveals it.

Absolute AUROC is lower on the arithmetic tasks than on $\tau$-bench. A wrong answer moves the decision state less than a give-up. Success rates across these arithmetic arms run from $29\%$ to $85\%$, without a matching spread in AUROC.

A third task is FinQA, numerical reasoning over financial tables and text, where a calculator-tool agent answers questions under the same protocol \citep{chen2021finqa}. For Llama-3.1-8B and Qwen2.5-7B the internal signal again sits above lexical TF–IDF and slightly above the semantic embedding, at AUROC $0.73$ and $0.70$ under the leave-one-out calibration of Table~\ref{tab:general}, although without reaching significance. The Holm correction in Section~\ref{sec:exp-det} applies to three tests on $\tau$-bench that were specified in advance. The GSM8K and FinQA comparisons were not, and receive no multiplicity correction, so neither the significant Qwen GSM8K gap nor the null FinQA cells carry confirmatory weight (Appendix~\ref{app:stats}). An open-simulator variant of $\tau$-bench shows the same ordering (Appendix~\ref{app:opensim}).

\subsection{Attribution}
\label{sec:exp-attr}

The same omission structure that helps detection makes naming the failing step hard. The labelled step often lands at the end of the run, and that position alone can carry a high score.

Figure~\ref{fig:goldpos_main} (Appendix~\ref{app:goldpos}) shows where the gold step falls across benchmarks. It sits at the very end on the agent corpus, and mid-trajectory or early on the external ones.

A rule that reads no content can already do a lot on omission-heavy splits. On the retail split, always naming the last step is right $96\%$ of the time. No localizer tested here comes close to that value, yet this rule has learned nothing about the run. Raw step accuracy therefore mostly reflects where the gold label tends to sit. Table~\ref{tab:attribution} gives each localizer against the GPT-4o judge and the positional rules, with bootstrap intervals.

Localization is instead scored by the positional lift $L$ of Section~\ref{sec:recipe}, where for a run of length $n$ the positional family predicts $\text{round}(n p)$ for $p \in \{0, \tfrac{1}{2}, 1\}$ and $A_{\text{pos}}$ is its best accuracy. A normalised version dividing $L$ by the room left above the baseline is unstable as that room vanishes, so it is reported with a floored variant in Appendix~\ref{app:pss}.

\begin{table}[tb]
\centering
\small
\resizebox{\columnwidth}{!}{%
\begin{tabular}{lccccccc}
\toprule
Benchmark / split & $n$ & pos. & best pos. & learned & SOTA & lift & lift$^{\text{ho}}$ \\
\midrule
$\tau$ 8B airline    & $34$  & $.98$ & last $0.88$  & $0.53$ & --    & $-0.35$ & $-0.36$ \\
$\tau$ 8B retail     & $110$ & $1.00$ & last $0.96$  & $0.12$ & --    & $-0.85$ & $-0.85$ \\
$\tau$ 70B airline   & $39$  & $.84$ & last $0.67$  & $0.31$ & --    & $-0.36$ & $-0.36$ \\
$\tau$ Qwen airline  & $47$  & $.66$ & last $0.15$  & $0.32$ & --    & $+0.17$ & $+0.22$ \\
$\tau$ Gemma airline & $27$  & $.83$ & last $0.41$  & $0.44$ & --    & $+0.04^\ddagger$ & $+0.04^\ddagger$ \\
W\&W hand-crafted    & $58$  & $.46$ & last $0.12$  & $0.17$ & $0.28$ & $+0.05^\ddagger$ & $+0.07^\ddagger$ \\
W\&W algorithm-gen   & $126$ & $.37$ & first $0.16$ & $0.30$ & $0.52$ & $+0.14$ & $+0.16$ \\
SWE-bench Verified   & $120$ & $.59$ & mid $0.05$   & $0.03$ & --    & $-0.03$ & $-0.03$ \\
\bottomrule
\end{tabular}}
\caption{Positional-lift audit of step attribution. \emph{pos.} mean gold relative position, \emph{lift} learned minus best positional baseline, \emph{lift}$^{\text{ho}}$ with the rule chosen on a held-out half, \emph{SOTA} best published accuracy \citep{raffles2026}. $^\ddagger$the interval on skill touches zero (permutation $p\ge0.25$), so these cells are null results rather than small gains. The normalised skill score, with intervals, is in Appendix~\ref{app:pss}.}
\label{tab:attribution}
\end{table}

On omission-heavy splits, a simple positional rule already covers most of the apparent performance. The always-last baseline reaches $0.67$ on 70B airline, $0.88$ on 8B airline, and $0.96$ on 8B retail, above every learned localizer tested. The internal localizer reaches only $0.31$, $0.53$, and $0.12$ on these splits (Figure~\ref{fig:attr}, Table~\ref{tab:attribution}). A GPT-4o judge also falls below always-last on every split, since it tends to pick an early root cause rather than the late give-up point. GPT-4o also drives the user simulator behind these rows, but the positional baselines read no model output, and an open judge from another family behaves the same way. Mistral-Small-24B reaches $0.41$ on 8B airline and $0.05$ on 70B airline, far below always-last at $0.88$ and $0.67$.

Telling that judge that an omission's decisive step is the give-up point raises its accuracy to $0.85$ and $0.31$ and never hurts a trace, so most of its error is a mismatch in what counts as decisive rather than inability to read the run, though it still trails always-last on both splits (Appendix~\ref{app:judgepos}).

Five attribution settings come from SILENT-Bench, where omissions are common by design. Negative lift on these splits is therefore a property of that setting rather than of attribution in general, and within this setting the lift follows the gold labels rather than the method. The Qwen split fails more often by harmful writes (15 of 47), so always-last drops to $0.15$, and the judge shows a nominal lift of $0.17$. Two further cells are not distinguishable from their own baselines. On Gemma the skill interval is $[0.00, 0.20]$ with permutation $p{=}1.00$, and on Who\&When hand-crafted it is $[0.00, 0.14]$ with $p{=}0.25$. Their nominal lifts of $0.04$ and $0.05$ are therefore read as nulls. That leaves the Qwen split as the only SILENT-Bench cell with a positive lift, at $+0.17$, and it is not among the six pairs that survive the audit's correction (Appendix~\ref{app:stats}), so it stands as one suggestive cell rather than a demonstrated gain.

\subsection{How far does the confound reach}
\label{sec:exp-field}

The audit covers every offline benchmark that publishes a gold step index for each run, since a matched positional baseline needs these indices and cannot use benchmarks that report only aggregate accuracy. The final set has $14$ splits across five benchmarks, including $2221$ ProcessBench solutions \citep{processbench2024} whose gold step is the first wrong one. These labels fall early rather than late and would expose any standard that fires only on give-ups.

Section~\ref{sec:bench} reported that only $11.1\%$ of $\tau$-bench failures have a decisive step that replay can verify. For the rest, the gold label is a convention rather than a measurement, and that convention is what the positional baselines exploit. Scoring only the failures that do have a verified step moves the gold to the middle of the run, and the positional ceiling falls to near random (Appendix~\ref{app:wellposed}).

On TRAIL \citep{trail2025}, $81.1\%$ of $143$ traces carry more than one annotated error span (mean $4.01$)~(Appendix~\ref{app:trail}), so the labelled step is one choice among several.

The share of failures with a verified decisive step varies by agent, domain, and benchmark. Table~\ref{tab:recover} summarises this spread. Holding the environment fixed and changing only the agent takes the share from $35.3\%$ to $0\%$ (Fisher $p<0.001$). Holding the agent fixed and changing the domain takes it from $36.4\%$ to $2.3\%$. Across benchmarks it runs from $0\%$ to $87.4\%$. The positional ceiling tracks this share. Where few failures have a real decisive step, the labelled step is almost always the give-up point and an always-last rule is almost always right. The confound is therefore a property of the labelling convention rather than of how agents break, which makes this fraction something a benchmark should report. The newer $\tau^2$-bench \citep{tau2bench2025} enters only through recoverability. Too few and unrepresentative successful runs left no stable trusted reference for detection.

\begin{table}[t]
\centering
\small
\resizebox{\columnwidth}{!}{%
\begin{tabular}{llrrc}
\toprule
Benchmark / domain & Agent & $n$ & \% recov. & ceiling \\
\midrule
$\tau$-bench airline & Llama-3.1-8B      & $34$ & $35.3$ & $0.88$ \\
$\tau$-bench airline & Llama-3.1-70B     & $39$ & $25.6$ & $0.67$ \\
$\tau$-bench airline & Qwen2.5-7B        & $47$ & $25.5$ & $0.11$ \\
$\tau$-bench airline & Gemma-2-27B       & $27$ & $14.8$ & $0.40$ \\
$\tau$-bench airline & Mistral-24B       & $30$ & $0.0$  & -- \\
\midrule
$\tau$-bench airline & Llama-3.1-8B      & $154$ & $36.4$ & -- \\
$\tau$-bench retail  & Llama-3.1-8B      & $442$ & $2.3$  & $0.96$ \\
$\tau$-bench, both   & Llama-3.1-8B      & $596$ & $11.1$ & -- \\
\midrule
$\tau^2$-bench retail  & GPT-4.1         & $60$  & $63.3$ & -- \\
$\tau^2$-bench telecom & GPT-4.1         & $199$ & $87.4$ & -- \\
SWE-bench Verified     & CoderForge-32B  & $198$ & $7.6$  & $0.05$ \\
\bottomrule
\end{tabular}}
\caption{How often a failure has a decisive step that replay can verify. \emph{\% recov.} share of failures where some prefix makes the goal unreachable, \emph{ceiling} positional ceiling of that split. Methods and intervals in Appendices~\ref{app:wellposed} and~\ref{app:tau2}.}
\label{tab:recover}
\end{table}

Taking the ratio of the positional ceiling to random shows how strong this effect can be. Agent-run benchmarks average $1.9$ to $3.3\times$, while math step-localisation benchmarks sit near $1.1\times$ (Table~\ref{tab:fieldfull}). Who\&When shows the spread inside one benchmark (Appendix~\ref{app:whoandwhen}). Reported accuracy on its algorithm-generated split runs from $0.125$ to $0.516$, against an always-first baseline of $0.159$ \citep{whoandwhen2025,raffles2026}. The same benchmark thus supports both a below-baseline and a $3.2\times$ result, depending on the method.

The confound does not simply make every method look weak. Six of the $30$ method–baseline pairs clear their positional baseline at a $5\%$ false discovery rate, all of them math judges on ProcessBench, where a positional rule earns almost nothing (Appendix~\ref{app:fieldmethods}). On agent traces, no method run in this study is among those six. Two cells do show a positive nominal lift, the judge on the harmful-write Qwen split at $+0.17$ and the judge on the Who\&When algorithm-generated split at $+0.14$, but neither carries an interval that separates it from its baseline (Section~\ref{sec:exp-attr}). A published method run under its own official implementation also fails to clear, scoring $0.151$ against an always-first rule at $0.159$ on the protocol that reproduces its reported accuracy (Appendix~\ref{app:official}). An open reimplementation of a specialist architecture comes closest, clearing nominally but not after correction (Appendix~\ref{app:raffles}). Several published systems do clear, by up to $3.2\times$, while the original benchmark's own best result on one split falls below an always-last rule (Appendix~\ref{app:publishedlift}). The correction therefore rules out the appearance of skill in methods that never read content, without ruling out attribution skill itself. Growing the content-blind family to the best fixed index or a length-conditional rule raises the bar further on Who\&When and ProcessBench and trims the weakest nominal lifts, while the specialists that clear by a wide margin still clear (Appendix~\ref{app:pss}).

\subsection{Scale, deployment, and domain calibration}
\label{sec:exp-deploy}

Most of the detection evidence comes from one model family, and the 70B arm supports no scaling claim. With $11$ successful runs its AUROC interval $[0.710, 0.941]$ cannot resolve differences from the 8B arm below about $0.12$ (Appendix~\ref{app:bench}). Scoring takes $0.35$ to $1.14$ milliseconds per run on one CPU core with no extra inference pass, and calibration remains stable from $N{=}21$ down to $N{=}5$ trusted runs (Appendices~\ref{app:bensize} and~\ref{app:samplecomplexity}). Cross-domain AUROC stays close to within-domain values on the regenerated corpus, so the earlier collapse traces to the tiny retail trusted set rather than to the domain pair.

\section{Discussion}
\label{sec:discussion}

When activations are available, they are the signal to keep. They are the only signal that separates from transcript length on a corpus large enough to tell them apart. They remain useful when about $30\%$ of the calibration set is unlabelled failures, and keep their power under sampling, where the text signal drops to near chance (Appendices~\ref{app:layer} and~\ref{app:contamination}). The signal is spread out rather than stored in one place. Across eighteen sparse dictionaries almost no latent fires on failing steps while staying quiet on trusted ones, and the few that do separate no better than the best of an equal budget of random directions (Table~\ref{tab:sae}, Appendix~\ref{app:geometry}).

\begin{table}[h]
\centering
\small
\resizebox{\columnwidth}{!}{%
\begin{tabular}{lrrccc}
\toprule
Corpus & trusted & dict. & give-up & best latent & full \\
\midrule
GSM8K Qwen-7B     & $1866$ & $7168$  & $0$ & $0.61$ & $0.73$ \\
GSM8K Mistral-24B & $881$  & $10240$ & $0$ & $0.60$ & $0.64$ \\
$\tau$-bench 8B   & $261$  & $8192$  & $1$ & $0.63$ & $0.80$ \\
FinQA Llama-8B    & $165$  & $8192$  & $3$ & $0.65$ & $0.75$ \\
\bottomrule
\end{tabular}}
\caption{Sparse dictionaries fitted on trusted steps only. \emph{give-up} latents active on over $10\%$ of failing and under $1\%$ of trusted steps, \emph{best latent} best single-latent AUROC, \emph{full} full-space distance.}
\label{tab:sae}
\end{table}

This differs from work that finds single directions for truthfulness \citep{zou2023repe,marks2024geometry}, so a monitor should read the whole state rather than watch one unit.

Precision is the main limit, not ranking quality. At a $10\%$ false-positive rate the detector gives precision $0.25$ when $5\%$ of runs fail, and $0.06$ when $1\%$ do (Appendix~\ref{app:oppoint}), so most alerts are runs that succeeded. Because such a monitor could also quietly drop failing runs rather than escalate them, the alert stream should stay part of the audit trail.

\section{Conclusion}
\label{sec:conclusion}

On the omission-heavy splits a rule that names the last step and reads nothing else is correct 67\% to 96\% of the time. That is usually where the agent gave up. On failures whose decisive step replay can verify, the same rule falls to near chance. Both monitoring scores should therefore be reported as gain over the best matched structural baseline. For detection, that reference is transcript length, and for attribution it is the step position. Under that correction, no method tested here gains on those splits. A released script computes the attribution gain from the gold indices benchmarks already published. Internal activations detect failures better than transcript length and text embeddings once the test set is large enough to resolve the difference. The text baseline's strength depends partly on which model writes the user side. Precision stays between 0.06 and 0.25 at realistic failure rates, so such a monitor should rank runs for human review, not replace it.

\section*{Limitations and Future Work}

The study covers a small set of tasks and model families, namely airline and retail agents on $\tau$-bench, calculator agents on GSM8K and FinQA, and the Who\&When and TRAIL attribution benchmarks, across four model families \citep{llama3,qwen25,gemma2}. Whether the results carry to other agent stacks, tool interfaces, or long multi-agent workflows is untested. The cycle and graph baselines are reimplementations rather than the original systems, and they are scored without thresholds because the original thresholds are unpublished.

The method needs enough successful runs in each domain, and the reference does not transfer cleanly between domains or survive a change of agent. Each deployment therefore needs its own calibration and periodic rechecks. This requirement also prevented the use of a second detection environment. On $\tau^2$-bench, too few trusted successful runs, and unrepresentative ones, left the estimates unstable.

Detection results are strongest for runs that stop early. On GSM8K, where a silent failure is a confident wrong answer, AUROC is lower. Subtle harmful actions and coordination errors may require different signals.

The internal signal is shown to be distributed, but its structure is not explained. The dictionary and geometry results indicate that it is not a single feature or direction but do not specify what it represents.

\section*{Ethical Considerations}

This work uses only public benchmark data from $\tau$-bench, Who\&When, TRAIL, and the GSM8K calculator setting. No real user logs or personal data are involved, and no consent beyond the benchmarks' own licences is needed. Running a monitor of this kind on real logs would be different and would require standard data-protection and governance measures, including users' rights to access and deletion.

The monitor also carries a dual-use risk. It only observes and scores runs, and does not change what an agent can do. A detector that is cheap enough to run on every trajectory could be used to find and quietly drop failing runs that ought to be escalated, hiding failures instead of surfacing them. Deployments should therefore treat the alert stream as part of the audit trail, so that removing or filtering alerts is itself visible to oversight.

\bibliography{silent}

\appendix

\section{Data and labels}

\subsection{SILENT-Bench construction}
\label{app:bench}
The corpus instantiates the label-free setting of Section~\ref{sec:bench} on top of $\tau$-bench
\citep{taubench2024}. A GPT-4o simulated user drives the conversation. The agent under test is
Llama-3.1-8B-Instruct on airline and retail, and Llama-3.1-70B-Instruct on airline.

Each assistant turn is parsed into a numbered step holding either one tool call with its
deterministic return, or a plain text reply. The monitor therefore sees only what is available at
inference time. A balanced-brace JSON extractor handles both the native Llama tool-call format and
the \texttt{<function=>} format. An earlier version handled only the latter and mislabelled every tool call as a user response. After
the fix all runs were re-parsed and their $654$ executed tool returns checked against the environment
logs, so no misparsed run entered the corpus.

The decision state at each step is the residual stream at the final transformer layer, read at the
last prompt token immediately before the agent acts. This is layer $32$ at 8B and layer $80$ at 70B,
giving a $4096$-dimensional state at 8B. It comes from the agent's own greedy forward pass, with no
second inference call, and Appendix~\ref{app:layer} shows the final layer is the strongest. Tool
returns pass through a deterministic oracle that flags errors by string markers such as
\emph{error}, \emph{not found}, \emph{invalid} and \emph{denied}.

Table~\ref{tab:corpus} gives the composition. Generation ran on one NVIDIA H200 with the agent in
4-bit. A $500$-task pass at 8B takes about $35$ minutes and the full corpus a few GPU-hours, most of
it the 70B rollouts. Activations add no cost, and scoring at monitoring time is $0.35$ ms per run at
8B and $1.14$ ms at 70B.

\begin{table}[h]
\centering
\small
\begin{tabular}{llrrr}
\toprule
Agent & Domain & runs & succ. & fail \\
\midrule
Llama-3.1-8B  & airline & $50$  & $16$ & $34$ \\
Llama-3.1-8B  & retail  & $115$ & $5$  & $110$ \\
Llama-3.1-70B & airline & $50$  & $11$ & $39$ \\
\bottomrule
\end{tabular}
\caption{Greedy corpus composition. Tool-error rate is $0.44$ on airline and $0.72$ on retail, over $654$ executed tool returns.}
\label{tab:corpus}
\end{table}

\begin{figure}[t]
\centering
\begin{tikzpicture}[
  font=\scriptsize,
  >={Latex[length=1.4mm]},
  box/.style={draw,semithick,rounded corners=2pt,align=left,inner sep=3pt,
              text width=32mm,minimum height=16mm},
  outbox/.style={draw,semithick,rounded corners=2pt,align=center,inner sep=3pt,
              fill=gray!12,text width=68mm}]
\node[box,fill=blue!5,draw=blue!45,anchor=north west] (om) at (0,0) {%
  {\bfseries Silent omission} \emph{(airline)}\\[1pt]
  \ttfamily transfer\_to\_\allowbreak human\_agents\\
  \ttfamily $\to$ Transfer successful\\[1pt]
  \rmfamily\emph{no error, text distinctive}};
\node[box,fill=orange!6,draw=orange!55,anchor=north west] (ac) at (36mm,0) {%
  {\bfseries Acted failure} \emph{(retail)}\\[1pt]
  \ttfamily exchange\_...\allowbreak(\#W0000000)\\
  \ttfamily $\to$ Error: order not found\\[1pt]
  \rmfamily\emph{text normal, activation off}};
\coordinate (mid) at ($(om.south)!0.5!(ac.south)$);
\node[outbox,below=8mm of mid] (ban) {%
  \textbf{Silent failure}, no exception, fluent reply, task goal unmet};
\draw[->,blue!60,semithick] (om.south) -- (om.south|-ban.north)
  node[midway,left=0pt,font=\tiny]{text};
\draw[->,orange!70,semithick] (ac.south) -- (ac.south|-ban.north)
  node[midway,right=0pt,font=\tiny]{internal};
\end{tikzpicture}
\caption{Two silent-failure modes from the datasets, with real trajectory text.}
\label{fig:failtypes}
\end{figure}

\subsection{Which corpus supports which result}
\label{app:vintage}

The \emph{greedy data} are $165$ runs with $21$ trusted successes, generated with a GPT-4o user simulator under greedy decoding. They include the tool oracle and the tool-grounding signal and are used for the layer sweep, contamination, drift, trusted-size, and supervised-skyline appendices, the 70B arm, and the attribution audit in Section~\ref{sec:exp-attr}. 

The \emph{regenerated data} are $660$ runs with $64$ trusted successes over airline and retail, generated with a fully open pipeline (a local Qwen2.5-32B user simulator) and sampling at temperature $0.7$ across four seeds. They are used for the detection results in Section~\ref{sec:exp-det}, the domain-transfer numbers, and the recoverability analysis in Appendix~\ref{app:wellposed}. Table~\ref{tab:vintage} shows which reported quantity comes from which dataset, so that comparisons across tables can be read with the source in view.

\begin{table}[h]
\centering
\small
\resizebox{\columnwidth}{!}{%
\begin{tabular}{lll}
\toprule
Quantity & Corpus & Where \\
\midrule
Detection AUROC, all signals & regenerated & Table~\ref{tab:detection} \\
Length and step-count baselines & regenerated & Table~\ref{tab:detection} \\
Tool-grounding signal & greedy & Table~\ref{tab:detection} \\
Give-up keyword row & greedy & Table~\ref{tab:detection} \\
Domain transfer & regenerated & \S\ref{sec:exp-deploy} \\
Recoverability fraction & regenerated & App.~\ref{app:wellposed} \\
Attribution audit, $\tau$ rows & greedy & Table~\ref{tab:attribution} \\
70B arm & greedy & \S\ref{sec:exp-deploy} \\
Layer sweep, drift, contamination & greedy & App.~\ref{app:layer} \\
Trusted-set size sweep & greedy & App.~\ref{app:bensize} \\
\bottomrule
\end{tabular}}
\caption{Which data vintage underlies which reported quantity.}
\label{tab:vintage}
\end{table}

\subsection{Decisive-step by counterfactual replay}
\label{app:gold}
Decisive-step labels come from deterministic counterfactual replay, which needs no GPU or API calls
because $\tau$-bench scores a trajectory by its final database state. The gold write actions are
applied to a fresh database to obtain a target hash, and the harmful decisive step is the earliest
agent write after which the gold actions can no longer reproduce it. Where no such write exists the
failure is an omission, and the label falls back to the give-up point.

Because the give-up point sits at or near the end of the trajectory by construction, an always-last
rule wins on omission-heavy splits partly through the labelling rule rather than through how methods
behave. The field-level claim in Section~\ref{sec:exp-field} therefore rests on external benchmarks,
whose gold labels are annotated independently of position. These splits show what a positionally
saturated benchmark looks like. They do not on their own show that attribution methods lack skill.

Under this rule most failures here have no decisive action at all. Evaluating recoverability at every
prefix, rather than only at write actions, gives $39$ of $257$ failures across the five splits, with
per-split rates in Table~\ref{tab:recover}. That omission structure is what makes distance from
normal operation a usable detection signal, and what places the labelled step at the end of the run.

\begin{table}[t]
\centering
\small
\resizebox{\columnwidth}{!}{%
\begin{tabular}{llccccc}
\toprule
Task (failure) & Model & $n$ (succ.) & length & SBERT & internal & $p$ \\
\midrule
$\tau$-bench (omission) & Llama-8B    & $660$ ($64$)  & 0.64 & 0.55 & $\mathbf{0.79}$ & $<0.001$ \\
GSM8K (wrong ans.)      & Llama-8B    & $100$ ($29$)  & 0.76 & 0.62 & 0.65 & $0.720$ \\
GSM8K (wrong ans.)      & Qwen-7B     & $1319$ ($852$) & 0.58 & 0.58 & $\mathbf{0.73}$ & $<.001$ \\
GSM8K (wrong ans.)      & Mistral-24B & $300$ ($233$) & 0.68 & 0.67 & 0.63 & $0.314$ \\
GSM8K (wrong ans.)      & Qwen-32B    & $300$ ($256$) & 0.67 & 0.63 & $\mathbf{0.81}$ & $0.007$ \\
FinQA (wrong ans.)      & Llama-8B    & $191$ ($58$)  & 0.73 & 0.70 & 0.73 & $0.508$ \\
FinQA (wrong ans.)      & Qwen-7B     & $191$ ($74$)  & 0.56 & 0.65 & 0.70 & $0.255$ \\
FinQA (wrong ans.)      & Mistral-24B & $191$ ($97$)  & 0.56 & 0.67 & $\mathbf{0.76}$ & $0.048$ \\
\bottomrule
\end{tabular}}
\caption{Detection across tasks and model families, leave-one-out calibration. \emph{length} is the transcript-length baseline. $p$ tests internal against SBERT.}
\label{tab:general}
\end{table}

\subsection{Cross-family airline splits}
\label{app:crossfamily}

Two further airline splits test whether the attribution audit depends on the Llama family. Qwen2.5-7B-Instruct is run through the same harness as the Llama agents, giving $50$ runs ($3$ successful, $47$ failing, $3584$-dimensional decision states). Gemma-2-27B-it needs a modified harness because its chat template rejects the system role and ignores the tools argument. The wiki and a text-rendered tool catalogue are merged into the first user turn. The conversation uses only user and model roles, and tool returns are presented as user turns. The model's JSON tool calls are parsed with the same balanced-brace extractor. The saved traces use the standard schema, so every analysis script runs unchanged. This split gives $30$ runs ($3$ successful, $27$ failing, $4608$-dimensional states).

Counterfactual replay marks $15$ of the $47$ Qwen failures and $4$ of the $27$ Gemma failures as harmful actions rather than omissions. On these splits the always-last baseline is therefore weak, and they are the two SILENT-Bench settings where a judge earns positive lift. Both splits enter the attribution audit only. With three successful runs each, they are too small to build a trusted reference for detection, and any detection number from such a small calibration set would be misleading. Sampling more runs does not change this. A further $55$ Qwen airline runs at temperature $0.7$ add only three more successes, a rate near $5\%$. Matching the $64$ trusted successes of the regenerated Llama test set would then need roughly a thousand Qwen rollouts on a domain with $50$ distinct tasks. The bottleneck is the agent’s success rate on this domain, not the number of runs sampled, so cross-family detection evidence comes from GSM8K and FinQA instead.

\subsection{Open-source pipeline and a new agent family}
\label{app:opensim}

The main dataset uses GPT-4o as the $\tau$-bench user simulator. To remove that external dependence and to add an agent family beyond Llama, the $\tau$-bench dataset is regenerated end to end with only open-weight models. A local Qwen2.5-32B-Instruct model drives the simulated customer through the same user-simulation loop, and a new agent family, Mistral-Small-24B-Instruct, is run on the airline domain. No API key is used at any point. Table~\ref{tab:opensim} reports label-free detection AUROC on these open-pipeline datasets under the same trusted-only protocol as in the main text.

The detection pattern holds on this fully open setup. On the new Mistral-Small-24B family, the two semantic signals both score above lexical TF–IDF, with SBERT at $0.71$ and the internal anomaly at $0.64$ against $0.40$ for TF–IDF. On a fully open Llama-3.1-8B dataset, the internal signal reaches AUROC $0.834$, close to the $0.842$ of the GPT-4o-simulated main dataset.\footnote{Two label-free calibrations appear for the pooled 8B greedy corpus. Values of $0.842$ (internal) and $0.857$ (SBERT) fit the trusted mean and deviation on all $21$ successes and score every run against that single reference. Table~\ref{tab:decoding} instead holds each successful run out of the reference before scoring it, which gives $0.781$ and $0.805$ on the same $165$ runs. The gap is the in-sample optimism of the first protocol, not a different signal.} The open Llama dataset has only $13$ successful runs across $150$ trajectories, which gives a smaller trusted set, wider confidence intervals, and a lower text-reference number. These results indicate that the internal detector remains stable under a fully open data-generation pipeline, but they are not combined with the main GPT-4o dataset.

\begin{table}[h]
\centering
\small
\resizebox{\columnwidth}{!}{%
\begin{tabular}{lcccccc}
\toprule
Open-pipeline corpus & $n$ & succ. & TF-IDF & SBERT & internal \\
\midrule
Mistral-24B airline        & $50$  & $20$ & 0.40 & 0.71 [.55,.85] & 0.64 [.48,.80] \\
Llama-8B airline$+$retail   & $150$ & $13$ & 0.52 & 0.68 [.51,.82] & 0.83 [.65,.98] \\
\bottomrule
\end{tabular}}
\caption{Detection AUROC on $\tau$-bench datasets from a fully open pipeline (Qwen2.5-32B user simulator), bootstrap $95\%$ CIs in brackets.}
\label{tab:opensim}
\end{table}

\section{Descriptive figures}

\subsection{Gold decisive-step positions}
\label{app:goldpos}

Gold decisive steps are not spread evenly along the trajectory. Relative position is defined as $g/(n-1)$ for a gold step $g$ in a trajectory of $n$ steps, indexed from zero. This quantity is undefined for single-step trajectories, where the only step is both the first and the last. Such trajectories are common in this dataset, with $14$ of $34$ failing airline runs and $13$ of $110$ failing retail runs consisting of one action. Assigning them position $0$, as a direct reading of the formula would do, makes them count as “earliest possible” while an always-last rule still predicts them correctly, which understates the mean gold position. Excluding them, the mean gold position is $0.98$ on 8B airline and $1.00$ on 8B retail, consistent with always-last accuracies of $0.88$ and $0.96$. The 70B airline split contains no single-step runs, and its mean position is $0.84$. The pooled agent-corpus mean behind Figure~\ref{fig:goldpos_main} is $0.98$, because the two 8B splits contribute $117$ of the $156$ pooled failures and their gold sits later. The judge analysis quotes $0.87$ on the same $39$ 70B failures because its step count comes from the gold annotation file rather than the trace parser, which shifts the denominator by one on some runs.

\begin{figure}[t]
\centering
\includegraphics[width=\columnwidth]{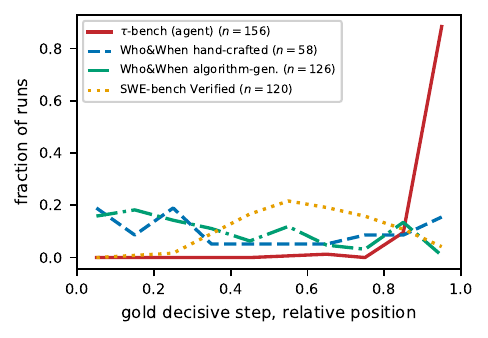}
\caption{Where the gold decisive step falls, as a fraction of the way through a run. On $\tau$-bench the mass sits at the end, which is what makes an always-last rule strong there. Single-step runs are excluded, since for them first and last coincide (Appendix~\ref{app:goldpos}).}
\label{fig:goldpos_main}
\end{figure}

Figure~\ref{fig:goldpos_main} shows this distribution across benchmarks, excluding single-step trajectories. On $\tau$-bench, omission gold sits at the very end of the run, so an always-last baseline is strong. On Who\&When, the hand-written gold sits near the middle of the trajectory (mean $0.46$), and the algorithm-generated gold sits earlier (mean $0.37$). On those splits, always-last and always-first are the key positional baselines. A positional baseline matched to each benchmark (always-last or always-first) is therefore strong on every benchmark. The positional-lift audit (Table~\ref{tab:attribution}) reports attribution performance relative to these matched baselines rather than in isolation.

\subsection{Decisive-step accuracy figure}
\label{app:attrfig}

Figure~\ref{fig:attr} shows decisive-step accuracy on the give-up gold across the SILENT-Bench splits. The always-last baseline (grey) scores higher than all learned methods and higher than the GPT-4o judge, even though the judge sees the full numbered run with tool names and tool outputs. This gap shows that raw step accuracy mostly reflects where the gold step tends to fall, not how well a method reads the steps. The main text therefore reports lift over this positional baseline rather than raw accuracy alone.

\begin{figure}[h]
\centering
\includegraphics[width=0.95\linewidth]{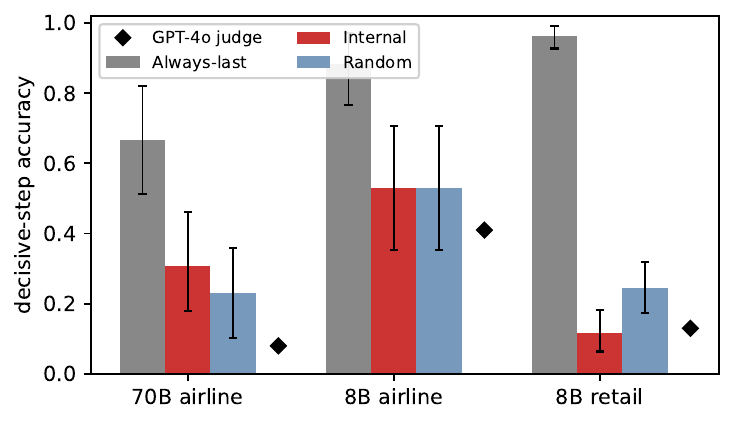}
\caption{Decisive-step accuracy on the give-up gold ($95\%$ bootstrap CIs). The always-last baseline (grey) dominates the learned localizers and the judge.}
\label{fig:attr}
\end{figure}

\subsection{Qualitative failure examples}
\label{app:case}

\textbf{A truly silent omission (70B airline):}
A representative omission runs four agent steps. The agent reasons, calls \texttt{get\_user\_details}, reasons again, and then calls \texttt{transfer\_to\_human\_agents}, handing the task back without performing the reservation change the task required. The trajectory carries no tool error and reads as a fluent, courteous hand-off. A lexical text monitor therefore cannot flag it. The counterfactual-replay gold marks the give-up step (the transfer, step~3) as decisive. The internal anomaly signal and the always-last rule both point near the end of the trajectory. This example shows why omission gold is mainly about position.

\textbf{An acted failure with hallucinated arguments (8B retail, task 1):}
The agent calls \texttt{exchange\_delivered\_order\_items} with identifiers that it never looked up, namely the placeholder order id \texttt{\#W0000000}, placeholder item ids, and the payment method \texttt{gift\_card\_0000000}. The environment answers \texttt{Error: order not found} and the agent gives up shortly after, in a five-step run. The surface text of the call reads like a normal exchange request. The trajectory-level embedding therefore stays near the trusted mean, while the per-step activation is strongly anomalous. This is the internal-only detection case of Appendix~\ref{app:disagree}.

\textbf{A single-step hand-off (8B airline, task 5):}
The agent's first and only action is \texttt{transfer\_to\_human\_agents}. The give-up text is lexically distinctive, so the embedding flags it easily. A one-step trajectory gives the internal detector little to measure, and its brief activation does not move far from trusted successful runs. This is the embedding-only detection case of Appendix~\ref{app:disagree}, and it shows why the fused monitor keeps both semantic signals where white-box access exists.

\section{Detection robustness}

\subsection{Layer sweep and multi-seed robustness}
\label{app:layer}

Detection AUROC is first checked across internal layers so that the layer choice in Section~\ref{sec:exp-det} is not arbitrary. Figure~\ref{fig:layer} reports detection AUROC on the pooled 8B airline and retail runs when the internal RMS distance is read at each of layers $4$ through $32$. The last layer (AUROC $0.842$) scores highest among the internal layers on the greedy data, where the SBERT baseline is $0.857$. On the regenerated data the ordering of the two signals reverses (Section~\ref{sec:exp-det}). The match between internal and text signals in Section~\ref{sec:exp-det} is therefore not just a side effect of which layer is read.

\begin{figure}[h]
\centering
\includegraphics[width=0.78\linewidth]{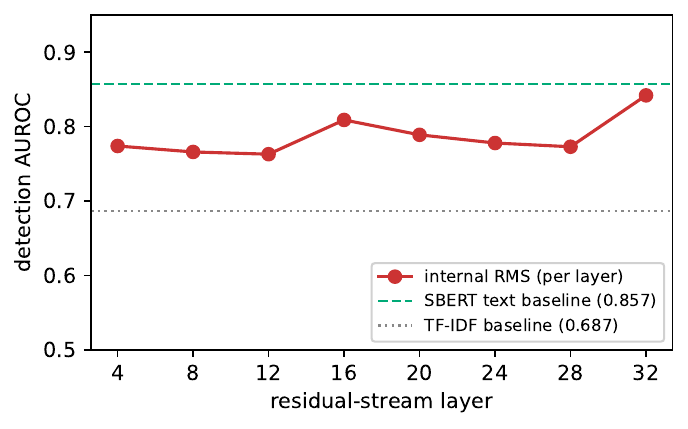}
\caption{Detection AUROC vs residual-stream layer, with text baselines marked.}
\label{fig:layer}
\end{figure}

Robustness to how trajectories are sampled is measured by regenerating the 8B airline split three times with the agent sampled at temperature $0.7$. The main runs use greedy decoding. Detection AUROC across the three seeds is $0.833 \pm 0.045$, in line with the greedy result. Pooling the three regenerations with the greedy runs also grows the truly silent subset (no tool error) to $73$ failures and $62$ successes. At that size the internal-anomaly AUROC is $0.624$ with interval $[0.523, 0.720]$, and the interval excludes chance. The SBERT text baseline is $0.582$ with interval $[0.481, 0.679]$, near chance on the noisier sampled text. The internal signal is therefore the more robust of the two on truly silent failures under stochastic decoding. Figure~\ref{fig:dist} shows how the internal anomaly score separates failing from successful trajectories on the 8B runs.

\begin{figure}[h]
\centering
\includegraphics[width=0.78\linewidth]{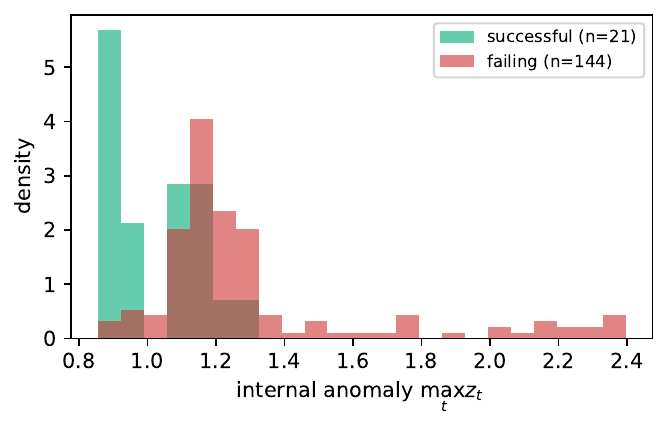}
\caption{Internal anomaly score, failing vs successful trajectories (8B).}
\label{fig:dist}
\end{figure}

\begin{table}[h]
\centering
\small
\resizebox{\columnwidth}{!}{%
\begin{tabular}{lccccc}
\toprule
Corpus & $n$ & succ. & internal & SBERT & $p$ \\
\midrule
\multicolumn{6}{l}{\emph{airline only, open simulator}} \\
greedy    & $50$  & $9$  & $0.799$ & $0.745$ & $0.455$ \\
temp $0.3$ & $200$ & $37$ & $0.824$ & $0.753$ & $0.073$ \\
temp $0.7$ & $200$ & $46$ & $0.783$ & $0.694$ & $0.044$ \\
\midrule
\multicolumn{6}{l}{\emph{airline$+$retail}} \\
greedy, GPT-4o sim & $165$ & $21$ & $0.781$ & $0.805$ & $0.589$ \\
greedy, open sim   & $150$ & $13$ & $0.810$ & $0.547$ & $0.005$ \\
temp $0.7$, open sim & $660$ & $64$ & $0.787$ & $0.551$ & $<0.001$ \\
\bottomrule
\end{tabular}}
\caption{Isolating decoding from the user simulator and from domain composition. The greedy GPT-4o row is scored leave-one-out, giving $0.781/0.805$ rather than the in-sample $0.842/0.857$.}

\label{tab:decoding}
\end{table}

Decoding is not the main factor, as shown in Table~\ref{tab:decoding}. On airline alone, where a three-point sweep is available, the internal advantage is $+0.054$ under greedy decoding, $+0.071$ at temperature $0.3$, and $+0.089$ at temperature $0.7$. The sign is the same in all three settings, and the magnitudes lie within noise of one another. Statistical significance follows the number of trusted runs rather than the temperature. Under identical greedy decoding, replacing the GPT-4o simulator with a local one moves the text embedding from $0.805$ to $0.547$, while the internal signal moves from $0.781$ to $0.810$. Who drives the user side therefore matters more than how the agent is decoded. Domain composition matters too, since pooled cells show larger gaps than airline alone and retail is where the text embedding is weakest.

\subsection{Prior trajectory anomaly detectors}
\label{app:priorbaselines}

Two families of unsupervised trajectory anomaly detector are cited in Section~\ref{sec:related} as targeting structural failures rather than omissions. That claim is tested here by reimplementing both families on the corpus and protocol of Table~\ref{tab:detection}. Cycle detection follows the two stages of \citet{georgecycle2025}. The structural stage scores a run by how often an identical tool call, matching on name and arguments, recurs, plus the number of repeated transitions between tool names. The semantic stage scores the largest similarity between two non-adjacent assistant steps under a TF–IDF representation. This is their redundant-content signal. The hybrid score standardises the two stages on trusted runs and sums them.

The graph detector follows \citet{sentinelagent2025} in scoring deviations from a normal profile. A transition graph over tool names is built from trusted successful runs alone. A run then scores by the fraction of its transitions that are absent from that graph plus its self-loop rate. Every trusted quantity is leave-one-out, as in the main protocol. Neither paper publishes an operating threshold, so all four detectors are scored threshold-free by AUROC, which is the most favourable reading available to them.

All four detectors score below the internal signal, and each gap is significant (Table~\ref{tab:priorbaselines}). The structural stage is the weakest at AUROC $0.570$. It produces a non-zero score on only $27.2\%$ of failing runs, against $14.1\%$ of trusted runs, so on nearly three quarters of the failures there is no repetition to detect. This matches the omission structure, since an agent that stops early and hands the task back does not loop. The semantic stage scores higher, at AUROC $0.660$, because give-up text is repetitive across steps and therefore closer to a text baseline than to a structural one.

\begin{table}[h]
\centering
\small
\resizebox{\columnwidth}{!}{%
\begin{tabular}{lccc}
\toprule
Detector & AUROC & $95\%$ CI & $p$ vs internal \\
\midrule
Cycle, structural & $0.570$ & $[.525,.613]$ & $<0.001$ \\
Cycle, semantic   & $0.660$ & $[.588,.733]$ & $<0.001$ \\
Cycle, hybrid     & $0.670$ & $[.592,.750]$ & $<0.001$ \\
Graph rarity      & $0.619$ & $[.565,.670]$ & $<0.001$ \\
\midrule
Internal, RMS dist. & $\mathbf{0.787}$ & $[.708,.857]$ & -- \\
\bottomrule
\end{tabular}}
\caption{Prior trajectory anomaly detectors on the regenerated corpus ($660$ trajectories, $64$ trusted successes), leave-one-out, threshold-free. DeLong tests are paired against the internal signal, whose row repeats the Table~\ref{tab:detection} interval.}
\label{tab:priorbaselines}
\end{table}

\subsection{A zero-shot LLM judge for detection}
\label{app:judgedet}

Deployments most often monitor an agent by asking a language model whether the run succeeded. To score that reference, Mistral-Small-24B-Instruct (4-bit, a different family from the Llama agent) is shown each full transcript of the regenerated corpus with roles marked, and returns a single 0--100 score that the run failed, under one fixed zero-shot prompt and greedy decoding. No calibration set is used, so the judge sees strictly more than the label-free signals, including the user's requests.

Under the $0$--$100$ prompt the emitted verdict saturates, returning $100$ on $640$ of the $660$ runs and $0$ on the remaining $20$. That flags $97\%$ of runs as failed, including all $64$ successes. Its AUROC of $0.483$ is therefore close to what any near-constant output gives by construction, and on its own it cannot support a claim about judge-based detection.

The judge is therefore re-scored without relying on the emitted text. The same model is asked the same question as a binary verdict, and the score is the probability it assigns to the failure answer, read from the first-position logits as a softmax over the yes and no tokens. This is continuous even when the argmax never changes. It separates the runs over $642$ distinct values, so the degeneracy of the text output is removed rather than measured. Under that scoring the judge reaches AUROC $0.433$ with interval $[0.358, 0.501]$, an interval whose upper end only just touches chance, and it assigns a slightly higher mean failure probability to successful runs ($0.955$) than to failing ones ($0.937$). The argmax verdict still calls $99.1\%$ of runs failures. A degenerate output format is therefore not what drives the result.

The cause is structural rather than a prompt accident. Success on $\tau$-bench is a property of the final database state, some tasks require no state change at all, and a successful run can therefore end in the same polite hand-off as an omission, so the transcript does not carry the verdict the judge is asked for. The result reflects this judge under one prompt rather than a ceiling for judge-based detection, and a judge given the task specification and tools to inspect the database would be solving a different and more expensive problem than the label-free monitor studied here. Concurrent work that varies both axes reports the same ceiling, with no configuration across five judges and five prompt strategies exceeding AUROC $0.65$ on $\tau^2$-bench \citep{falsesuccess2026}.

\subsection{Combining per-step scores into a run score}
\label{app:aggregator}

A run is scored by the largest standardised distance among its steps. This rule was specified in advance, and not revised afterwards, on the reasoning that a run is only as normal as its least normal step. Table~\ref{tab:aggregator} tests this choice against three alternatives on every corpus with recorded activations. These are the mean over steps, the mean of the three largest steps, and the last step alone.

The chosen rule is not the best on every corpus. The maximum is best or tied on only $3$ of $13$ datasets. The mean of the three largest steps beats it significantly on three datasets (with $p{=}0.000$, $0.004$, and $0.004$). The direction is consistent. Pooling a few high steps helps on the arithmetic datasets, where a wrong answer perturbs several steps mildly, while the maximum holds up on the agent datasets, where a single give-up step dominates.

The detection numbers in the main text are conservative, since a better aggregator would raise them on most datasets, by up to $0.04$ on the largest arithmetic set. The consequence is not only a smaller decimal. On the arithmetic datasets the aggregator can change which signal wins. On GSM8K with Mistral-Small-24B, the one arm where the internal signal trails the text embedding, the maximum gives AUROC $0.631$ against the embedding's $0.669$, while the last-step score gives $0.669$ and the mean gives $0.657$, so under a last-step aggregator that arm would be a tie rather than a loss. The count of six out of seven arithmetic arms in Section~\ref{sec:exp-gsm8k} is therefore conditional on the pre-specified maximum. The maximum is kept everywhere because it was fixed in advance, and selecting an aggregator after seeing which one repairs an unfavourable cell would not be a clean measurement.

\begin{table}[h]
\centering
\small
\resizebox{\columnwidth}{!}{%
\begin{tabular}{lrcccc}
\toprule
Corpus & $n$ & max & mean & top-3 & last \\
\midrule
$\tau$ 8B airline$+$retail & $165$ & $\mathbf{0.781}$ & $0.762$ & $0.765$ & $0.710$ \\
$\tau$ 70B airline        & $50$  & $0.834$ & $0.802$ & $\mathbf{0.851}$ & $0.816$ \\
$\tau$ 8B retail          & $115$ & $0.473$ & $0.440$ & $0.413$ & $\mathbf{0.482}$ \\
GSM8K Llama-8B            & $100$ & $0.645$ & $0.659$ & $\mathbf{0.681}$ & $0.646$ \\
GSM8K Qwen-7B             & $1319$& $0.733$ & $\mathbf{0.754}$ & $0.752$ & $0.718$ \\
GSM8K Mistral-24B         & $300$ & $0.631$ & $0.657$ & $0.655$ & $\mathbf{0.669}$ \\
GSM8K Qwen-32B            & $300$ & $0.809$ & $0.840$ & $\mathbf{0.853}$ & $0.656$ \\
FinQA Llama-8B            & $191$ & $0.726$ & $\mathbf{0.766}$ & $0.765$ & $0.758$ \\
FinQA Mistral-24B         & $191$ & $\mathbf{0.760}$ & $0.743$ & $0.749$ & $0.690$ \\
Open-sim $\tau$ 8B        & $150$ & $\mathbf{0.810}$ & $0.651$ & $0.725$ & $0.596$ \\
\bottomrule
\end{tabular}}
\caption{Detection AUROC by run-level aggregator, leave-one-out throughout. The maximum, which the main results use, is best or tied on 3 of the 13 datasets scored. Ten are shown, and the remaining three follow the same pattern. The 70B row rests on $11$ trusted successes and is underpowered (Section~\ref{sec:exp-deploy}).}
\label{tab:aggregator}
\end{table}

\subsection{Statistical protocol}
\label{app:stats}

Three tests carry the main claims, two comparing the internal signal against transcript length and against the text embedding, and a third checking by a two-one-sided test whether the two signals behave the same. Their $p$ values are adjusted together with a Holm correction. After this adjustment, both DeLong comparisons remain significant, with $p < 0.001$ against thresholds of $0.017$ and $0.025$.

All other tests are treated as exploratory and reported without adjustment. These include the per-domain detection splits, the per-dataset comparisons in Table~\ref{tab:general}, the attribution paired tests, the agent-gradient contrasts, and the $\tau^2$ comparisons. Several have $p$ values near $0.04$, including the airline and retail internal-versus-text splits and the Who\&When paired bootstrap, and none would survive a correction across all tests.

The attribution audit is the one place where many tests are read together, so it is adjusted. Each of the $30$ (method, benchmark) pairs is tested against its matched positional baseline using a paired permutation test over trajectories. The $p$ values from these tests are then adjusted across the whole audit with a Benjamini–Hochberg procedure. Eight pairs have a positive skill score whose bootstrap interval excludes zero. Six remain significant after adjustment at $q < 0.05$, and these six support the claim in Section~\ref{sec:exp-field}. The two that do not pass the $q < 0.05$ threshold, Mistral-24B on omnimath and on olympiadbench, already have $p > 0.05$ before adjustment.

\subsection{Trusted-set size sweep}
\label{app:bensize}

The internal reference set is down-sampled to $N$ successful trajectories drawn from the pooled 8B runs (21 successes in total). The per-dimension mean and standard deviation are re-estimated on that sample, and detection AUROC is recomputed, averaged over $50$ resamples per $N$. Figure~\ref{fig:bensize} shows that both signals fade in a smooth way as $N$ shrinks. Internal AUROC holds at $0.842$ with all $21$ successes, stays between $0.81$ and $0.84$ down to $N{=}5$, and falls only modestly to $0.803$ at $N{=}3$. The SBERT reference tracks it from $0.857$ down to $0.776$. A few tens of successful trajectories are therefore enough to set up the reference statistics, while very small trusted sets lose power and should be checked against a larger set of runs.

\begin{figure}[h]
\centering
\includegraphics[width=0.92\linewidth]{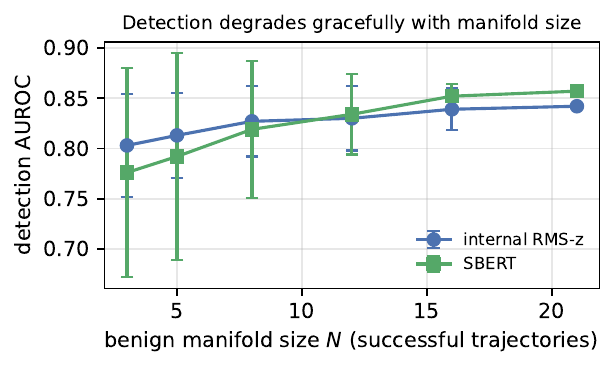}
\caption{Detection AUROC versus trusted-reference size for the internal and SBERT signals (mean $\pm$ std over $50$ resamples). Both degrade smoothly, and a few tens of successful trajectories suffice.}
\label{fig:bensize}
\end{figure}

Splitting the sweep by domain guards against a mismatch between them. The pooled sweep could hide a mismatch between domains, so the sweep is repeated within each domain alone. The reference set is resampled from that domain's own successes and detection is scored within the same domain, with $50$ resamples per $N$ (seed $0$, Table~\ref{tab:bensize-domain}). Airline detection stays between $0.81$ and $0.82$ (internal) from $N{=}3$ upward. Retail rises from $0.82$ at $N{=}2$ to $0.86$ at $N{=}5$ (SBERT), with the internal signal stable near $0.72$. The five-success retail reference set therefore behaves like the small-$N$ end of a smooth curve in its own domain, not like a side effect of pooling domains together.

\begin{table}[h]
\centering
\small
\resizebox{\columnwidth}{!}{%
\begin{tabular}{llccccc}
\toprule
Split & Signal & $N{=}2$ & $3$ & $5$ & $8$ & $16$ \\
\midrule
\multirow{2}{*}{Airline} & Internal & -- & $0.809$ & $0.824$ & $0.819$ & $0.818$ \\
 & SBERT & -- & $0.768$ & $0.765$ & $0.766$ & $0.772$ \\
\midrule
\multirow{2}{*}{Retail} & Internal & $0.706$ & $0.723$ & $0.715$ & -- & -- \\
 & SBERT & $0.822$ & $0.830$ & $0.860$ & -- & -- \\
\bottomrule
\end{tabular}}
\caption{Per-domain trusted-reference size sweep (8B, mean over $50$ resamples, each domain scored on itself). Each sweep stops at that domain's success count.}
\label{tab:bensize-domain}
\end{table}

The $\tau$-bench 8B runs have only $21$ successes, so the pooled sweep stops at $N{=}21$. On the easier GSM8K task, Qwen2.5-7B succeeds often enough ($63$ successes) to extend the sweep to $N{=}50$. Absolute AUROC is lower there, since a wrong arithmetic answer shifts the state less than a give-up. Both signals are already flat by $N{\approx}10$ and gain under $0.01$ AUROC out to $N{=}50$ (internal $0.678$ at $N{=}3$ to $0.702$ at $N{=}50$, SBERT $0.689$ to $0.723$), with shrinking variance. A few tens of successful trajectories again suffice to set up the reference statistics.

\subsection{Trusted-set contamination}
\label{app:contamination}

Trusted trajectories are picked from downstream outcomes, so the calibration set can include some failures by mistake. To test this effect, $k$ failing trajectories are added to the pooled 8B trusted set (21 successes). The internal reference statistics $(\mu,\sigma)$ and the SBERT trusted mean are re-estimated on that mixed set, and detection AUROC is recomputed on the remaining runs, averaged over $50$ resamples per rate (Table~\ref{tab:contamination}). The two signals react differently. The internal signal stays almost unchanged up to $30\%$ contamination (from $0.842$ to $0.849$), while the SBERT signal drops steadily (from $0.857$ to $0.753$).

The difference comes from how each signal is calibrated. The internal reference is estimated over \emph{steps}. Most steps in a failing trajectory still look normal, since an omission follows normal behaviour until the point where it stops. Those mostly normal steps barely move the per-dimension moments. The embedding reference uses one vector per \emph{trajectory}, so each contaminant carries its distinctive give-up text straight into the trusted mean. This is the second setting, after truly silent failures under stochastic decoding (Appendix~\ref{app:layer}), where the internal signal is the more stable of the two. In settings where clean trusted labels cannot be guaranteed, this gives a concrete reason to prefer the internal signal, or at least to include it alongside text-based detection.

\begin{table}[h]
\centering
\small
\resizebox{\columnwidth}{!}{%
\begin{tabular}{lccccc}
\toprule
Contamination & $0\%$ & $4.5\%$ & $8.7\%$ & $19.2\%$ & $30\%$ \\
\midrule
Internal & $0.842$ & $0.846$ & $0.844$ & $0.848$ & $0.849$ \\
SBERT    & $0.857$ & $0.852$ & $0.844$ & $0.816$ & $0.753$ \\
\bottomrule
\end{tabular}}
\caption{Detection AUROC with the trusted calibration set contaminated by unlabeled failing trajectories (pooled $8$B, mean over $50$ resamples).}
\label{tab:contamination}
\end{table}

\subsection{Stale trusted set under drift}
\label{app:drift}

The main runs use greedy decoding, so a fixed trusted reference reflects only that decoding setting. To mimic a changing agent, the 8B airline split is regenerated three times with temperature-$0.7$ sampling, giving $150$ changed trajectories ($41$ successful, $109$ failing). Table~\ref{tab:drift} scores this changed agent twice, once using the original trusted reference and once using a reference recalibrated on the changed trusted traffic. Using a stale trusted reference lowers AUROC, and recalibrating on the new trusted traffic restores the AUROC.

\begin{table}[h]
\centering
\small
\begin{tabular}{llc}
\toprule
Agent config & Reference     & AUROC \\
\midrule
Original (greedy)    & Original       & $0.818$ \\
Drifted (temp 0.7)   & Stale original & $0.793$ \\
Drifted (temp 0.7)   & Recalibrated   & $0.814$ \\
\bottomrule
\end{tabular}
\caption{Staleness under agent drift (8B airline, internal RMS-$z$).}
\label{tab:drift}
\end{table}

\subsection{Detection operating points}
\label{app:oppoint}

The main text fixes the flag threshold at a $10\%$ trusted false-positive rate (FPR). Table~\ref{tab:oppoint} reports precision, recall, and $F_1$ on the pooled 8B runs at trusted FPR targets of $5\%$, $10\%$, and $20\%$ for the internal anomaly, the SBERT text embedding, and the equal-weight fusion. Precision stays near $0.97$ to $0.99$ across these settings because failures dominate the pooled runs (144 of 165 trajectories, an $87\%$ failure rate that no real deployment matches). Recall rises as the FPR allowance grows. A deployment can therefore trade missed failures against false alarms by moving the trusted threshold alone, without failure labels.

Precision depends on how common failures are, so the values above overstate what a deployment sees. AUROC and the per-class rates (recall, FPR) do not depend on how common failures are, but precision does, and the values above overstate what a real system would see. At a deployment failure rate $\pi$, the expected precision implied by the reported rates is
\[
\frac{\mathrm{TPR}\,\pi}{\mathrm{TPR}\,\pi+\mathrm{FPR}(1-\pi)}.
\]
For the fused detector at the $10\%$ FPR point (recall $0.632$), this gives precision $0.25$ at $\pi{=}0.05$ and $0.06$ at $\pi{=}0.01$. Tightening to the $5\%$ FPR point (recall $0.576$) gives $0.38$ and $0.10$. At realistic failure rates most flags are therefore false alarms. The threshold needs to be chosen from the alert volume an operations team can handle, not from precision on this pooled set. This behaviour is shared by any detector for rare failures at these AUROC levels and is the reason the threshold is exposed as an explicit trusted-quantile parameter rather than fixed.

\begin{table}[h]
\centering
\small
\resizebox{\columnwidth}{!}{%
\begin{tabular}{llccc}
\toprule
Signal & FPR & Precision & Recall & $F_1$ \\
\midrule
\multirow{3}{*}{Internal RMS-z} & $5\%$  & $0.986$ & $0.472$ & $0.638$ \\
 & $10\%$ & $0.976$ & $0.556$ & $0.708$ \\
 & $20\%$ & $0.960$ & $0.674$ & $0.792$ \\
\midrule
\multirow{3}{*}{SBERT} & $5\%$  & $0.986$ & $0.493$ & $0.657$ \\
 & $10\%$ & $0.978$ & $0.625$ & $0.763$ \\
 & $20\%$ & $0.966$ & $0.785$ & $0.866$ \\
\midrule
\multirow{3}{*}{Fused (equal)} & $5\%$  & $0.988$ & $0.576$ & $0.728$ \\
 & $10\%$ & $0.978$ & $0.632$ & $0.768$ \\
 & $20\%$ & $0.962$ & $0.708$ & $0.816$ \\
\bottomrule
\end{tabular}}
\caption{Detection precision, recall, and $F_1$ at three benign-calibrated FPR operating points (pooled Llama-3.1-8B airline$+$retail). The threshold is set on benign traffic alone.}
\label{tab:oppoint}
\end{table}

\subsection{Supervised upper bound}
\label{app:skyline}

To measure what is lost by the label-free constraint, two supervised references are trained on the pooled 8B runs (144 failing, 21 successful trajectories) with gold success labels. Each supervised detector is scored by stratified five-fold cross-validation so every trajectory is scored out of fold, over five random seeds. The first detector is a logistic regression on the mean-pooled per-step activation (4096-dimensional), the supervised analogue of a success probe \citep{agentprobe2026}. The second is a logistic regression over the three per-trajectory signals of Section~\ref{sec:method} (internal $\max_t z_t$, tool error, SBERT trusted-distance), which matches the label-tuned fusion ruled out by Proposition~\ref{prop:labelfree}. The label-free signals used for comparison are still calibrated on the trusted set as in the main protocol.

\begin{table}[h]
\centering
\small
\begin{tabular}{lc}
\toprule
Detector & AUROC \\
\midrule
Label-free internal RMS-$z$ & $0.842$ \\
Label-free SBERT & $0.857$ \\
Label-free equal-weight fusion & $0.859$ \\
Supervised probe (mean-pool act.) & $0.793\pm0.028$ \\
Supervised fusion (three signals) & $0.862\pm0.008$ \\
\bottomrule
\end{tabular}
\caption{Supervised upper bound versus label-free detection (pooled 8B, supervised rows five-fold CV over five seeds).}
\label{tab:skyline}
\end{table}

Table~\ref{tab:skyline} shows that the upper bound is essentially flat. Label-tuned fusion weights gain only $+0.003$ over the equal-weight label-free fusion, well within one CV standard deviation. The supervised activation probe falls below the unsupervised distance because 21 positives are not enough to fit a 4096-dimensional boundary that holds up across folds. On this pooled set, enforcing the label-free constraint therefore costs no measurable detection power.

A dataset with many more labelled successes could change the result and make the supervised activation model look better than the label-free one. The supervised fusion also uses the same trusted calibration as its input signals, so the test only measures how labels help to set the weights, not how they might help to learn new features.

\subsection{Internal-vs-embedding disagreement}
\label{app:disagree}

Three sentence encoders are compared, \texttt{all-MiniLM-L6-v2} for SBERT, \texttt{all-mpnet-base-v2} for MPNet and \texttt{BAAI/bge-small-en-v1.5} for BGE, each with its default mean pooling, unit-normalised embeddings and the assistant text truncated to $4000$ characters. All three land in the same AUROC band as the internal signal and well above lexical TF-IDF on the pooled 8B runs (SBERT, MPNet $0.854$, BGE $0.814$). This makes it unlikely that the lack of separation between internal and text-based detection is an artefact of a single embedding family. At the $10\%$-FPR operating point on the pooled 8B runs, the internal and SBERT detectors agree on most failing trajectories (62 both flag, 36 both miss). They disagree on 46, with 18 flagged by the internal signal alone and 28 by the embedding alone, with 44 of the 46 in the retail split.

The two disagreement types are interpretable. The internal-only cases are well-formed tool calls with made-up arguments, for example a \texttt{modify\_pending\_order} on the placeholder order id \texttt{\#W0000000}. The surface text reads like a normal call, so the embedding stays near trusted while the activation is anomalous. The embedding-only cases are short hand-offs, for example a single-step \texttt{transfer\_to\_human\_agents}. The give-up text is distinctive, but the brief activation does not move far from the trusted reference. Neither type lines up with a single simple failure pattern. Both concentrate in the split with the smallest trusted set, which supports a story where calibration size, not representation family, drives the disagreement.

The internal–SBERT tie could in principle come from mixing two opposite types of failure. Internal state would win where the agent \emph{acted}. It issued a consequential write-tool call with wrong content, so the transcript reads as normal operation. The embedding would win where the agent \emph{gave up} and produced a distinctive hand-off text. Splitting the 144 pooled failures by this axis rules out that explanation (Table~\ref{tab:regime}). Given the 21 successes, this is a statement about power rather than about equivalence, and the separation appears only on the larger datasets of Table~\ref{tab:general}.

\begin{table}[h]
\centering
\small
\resizebox{\columnwidth}{!}{%
\begin{tabular}{lcccc}
\toprule
Failure regime & $n$ & internal & SBERT & DeLong $p$ \\
\midrule
Acted (write call, wrong) & $82$ & $0.902$ & $0.938$ & $0.39$ \\
Gave up (transfer / no write) & $62$ & $0.763$ & $0.750$ & $0.80$ \\
\bottomrule
\end{tabular}}
\caption{Detection AUROC by failure regime (pooled $8$B corpus, each regime's failures against all $21$ successes). The internal SBERT tie holds within both regimes.}
\label{tab:regime}
\end{table}

\section{Attribution audit details}
\subsection{Who\&When text-arm details}
\label{app:whoandwhen}
The Who\&When text-only arm tests whether label-free attribution still works when the gold step is not tied to a fixed position.

\begin{table}[h]
\centering
\small
\resizebox{\columnwidth}{!}{%
\begin{tabular}{llcc}
\toprule
Split & Method & step-acc [95\% CI] & agent \\
\midrule
\multirow{4}{*}{Hand-crafted} & Label-free & $0.172$ $[0.086,0.276]$ & $\mathbf{0.655}$ \\
 & GPT-4o judge & $0.172$ & $0.500$ \\
 & Always-last & $0.121$ $[0.034,0.207]$ & -- \\
 & Random & $0.042$ & -- \\
\midrule
\multirow{4}{*}{Algorithm-gen.} & Label-free & $0.127$ $[0.071,0.190]$ & $0.278$ \\
 & GPT-4o judge & $\mathbf{0.302}$ & $0.429$ \\
 & Always-first & $0.159$ $[0.095,0.222]$ & -- \\
 & Random & $0.120$ & -- \\
\bottomrule
\end{tabular}}
\caption{Who\&When attribution (text-only, no gold seen). Bootstrap CIs in brackets.}
\label{tab:whoandwhen}
\end{table}

The Who\&When benchmark \citep{whoandwhen2025} provides 58 hand-crafted and 126 algorithm-generated failed multi-agent traces with gold mistake-step and mistake-agent labels and no activations.
The label-free text signal combines error and retry markers, repetition or loop indicators from TF-IDF cosine similarity to the previous step, and task drift.
It predicts the decisive step as the argmax of this combined score and predicts the mistake agent as the agent acting at that step.
On hand-crafted traces, it reaches step accuracy $0.172$ and mistake-agent accuracy $0.655$, compared to a GPT-4o judge at $0.172$ and $0.500$ (Table~\ref{tab:whoandwhen}).
On algorithm-generated traces, the judge is stronger, with step accuracy $0.302$ versus $0.127$.

Because the label-free method and the always-last baseline are scored on the same traces, a paired test is more informative than comparing overlapping marginal intervals.
On the hand-crafted split the two predictors disagree on only three traces, and the label-free prediction is correct on all three (discordant counts $b{=}3$, $c{=}0$).
This gives a paired-bootstrap lift confidence interval of $[0.00,0.12]$ and a one-sided paired-bootstrap $p{=}0.044$, stable across seeds.
The label-free method is never wrong on a trace where always-last is right on this split.
With only three discordant pairs, an exact one-sided McNemar test still bottoms out at $p{=}0.125$ (two-sided $0.25$), so the advantage is consistent but not strongly powered.

This text-only arm carries no internal state.
The two tests disagree in strength, at $p{=}0.044$ paired and $p{=}0.25$ unpaired. The paired form conditions on the three discordant traces, while the unpaired form resamples all 58. The body quotes the unpaired result, since that is what the rest of the audit uses. Neither reading extends to TRAIL (Appendix~\ref{app:trail}), where the decisive error sits inside long real traces.
A non-positional gold is therefore necessary but not sufficient for label-free localization.

\subsection{Published attribution results under the correction}
\label{app:publishedlift}
Specialist models cannot be re-run, because their weights are not released. Their reported accuracies are still exact counts on the audited splits, under the same metric, so the
correction applies to them directly.

The baseline is treated as estimated rather than known. Trajectories are drawn with replacement and the
best content-blind rule is recomputed on each draw. The method itself is modelled as a binomial draw
at its reported rate. Six of the ten published results clear under that test (Table~\ref{tab:publishedlift}). Treating the baseline as a known fixed rate rather than an estimate would credit eight of the ten, so the stricter form is used here.

Without per-trajectory predictions the test cannot condition on which trajectories each method got
right, so it stays weaker than the paired test used elsewhere.

\begin{table}[h]
\centering
\small
\resizebox{\columnwidth}{!}{%
\begin{tabular}{llcccc}
\toprule
Method & split & acc & lift & lift $95\%$ CI & clears \\
\midrule
\multicolumn{6}{l}{\emph{algorithm-generated, always-first $0.159$, $n{=}126$}} \\
RAFFLES            & alg & $0.516$ & $+0.357$ & $[+0.25,+0.46]$ & yes \\
AgenTracer-8B (w/G)   & alg & $0.429$ & $+0.270$ & $[+0.16,+0.37]$ & yes \\
AgenTracer-8B (w/oG)  & alg & $0.373$ & $+0.214$ & $[+0.11,+0.32]$ & yes \\
CORRECT            & alg & $0.381$ & $+0.222$ & $[+0.12,+0.33]$ & yes \\
Who\&When best     & alg & $0.254$ & $+0.095$ & $[-0.01,+0.19]$ & no \\
\midrule
\multicolumn{6}{l}{\emph{hand-crafted, always-last $0.121$, $n{=}58$}} \\
CHIEF              & hand & $0.293$ & $+0.172$ & $[+0.02,+0.31]$ & yes \\
RAFFLES            & hand & $0.276$ & $+0.155$ & $[+0.02,+0.29]$ & yes \\
AgenTracer-8B      & hand & $0.207$ & $+0.086$ & $[-0.05,+0.21]$ & no \\
CORRECT            & hand & $0.172$ & $+0.051$ & $[-0.07,+0.17]$ & no \\
Who\&When best     & hand & $0.086$ & $-0.034$ & $[-0.14,+0.07]$ & no \\
\bottomrule
\end{tabular}}
\caption{Published Who\&When results under the positional correction. Accuracies are exact integer counts on the full split, transcribed from the primary sources. Intervals resample trajectories, recomputing the best content-blind rule on each draw, so the baseline's own uncertainty is included. A fixed-baseline binomial test would credit two more rows.}
\label{tab:publishedlift}
\end{table}

\subsection{A published method under its own implementation}
\label{app:official}
The benchmark's released implementation was run unmodified with Qwen2.5-7B-Instruct on both splits,
changing no prompt and no parser. Its code always supplies the task answer, so these are the easier
with-ground-truth numbers.

Under the all-at-once protocol the accuracy lands close to the reported value on both splits, and
under the correction it clears neither split (Table~\ref{tab:official}).

The step-by-step protocol does not reproduce, because its loop records a step only when the reply opens
with a fixed affirmative string. The 7B model often answers off-format or flags nothing, which the code reads as no error. That is a capacity gap rather than a scoring artefact, so no conclusion is
drawn from those rows.

\begin{table}[h]
\centering
\small
\resizebox{\columnwidth}{!}{%
\begin{tabular}{llcccc}
\toprule
Protocol & Split & acc & reported & lift & $p$ \\
\midrule
all-at-once   & alg-gen & $0.151$ & $0.125$ & $-0.008$ & $0.61$ \\
all-at-once   & hand    & $0.052$ & $0.053$ & $-0.069$ & $0.95$ \\
\midrule
step-by-step\textsuperscript{$\ast$} & alg-gen & $0.040$ & $0.255$ & $-0.119$ & $1.00$ \\
step-by-step\textsuperscript{$\ast$} & hand    & $0.035$ & $0.070$ & $-0.086$ & $0.98$ \\
\bottomrule
\end{tabular}}
\caption{A published method run under its own released code. \emph{reported} is the accuracy its paper gives for that cell, \emph{lift} the gain over the matched positional rule. $^{\ast}$this protocol did not reproduce its reported accuracy here, so these two rows support no conclusion about the method.}
\label{tab:official}
\end{table}

\subsection{Two attribution models under the audit}
\label{app:raffles}
An open reimplementation of a published specialist architecture is run under the audit, written from
its description with a Qwen2.5-32B-Instruct backbone because the original weights are unreleased. A
current reasoning model, Qwen3-8B, is run alongside it under the same protocol as the GPT-4o judge.

Under the correction neither model clears its baseline (Table~\ref{tab:twomodels}). The
reimplementation clears nominally on the algorithm-generated split, but falls outside the audit's
corrected threshold of $p\le0.025$.

The reimplementation is only partial, since its prompts and decoding were reconstructed rather than
recovered. It also scores well below the accuracy the original reports. It therefore stands for the
architecture rather than for that system. The reasoning model failed to emit a usable
index on some traces, which are scored as wrong. Dropping those traces
instead leaves the hand-crafted verdict unchanged and makes the algorithm-generated lift nominally
positive, still short of the threshold.

\begin{table}[h]
\centering
\small
\resizebox{\columnwidth}{!}{%
\begin{tabular}{llccccc}
\toprule
Model & Split & acc & base & lift & $95\%$ CI & $p$ \\
\midrule
\multirow{2}{*}{Reimplementation} & alg-gen & $0.254$ & $0.159$ & $+0.095$ & $[0.01,0.18]$ & $0.032$ \\
 & hand & $0.155$ & $0.121$ & $+0.034$ & $[-0.09,0.16]$ & $0.38$ \\
\midrule
\multirow{2}{*}{Reasoning model} & alg-gen & $0.214$ & $0.159$ & $+0.056$ & $[-0.03,0.14]$ & $0.148$ \\
 & hand & $0.138$ & $0.121$ & $+0.017$ & $[-0.09,0.12]$ & $0.495$ \\
\bottomrule
\end{tabular}}
\caption{Two attribution models scored against their matched positional baselines. \emph{base} is the best content-blind rule on that split. No row survives the audit's corrected threshold of $p\le0.025$.}
\label{tab:twomodels}
\end{table}

\subsection{Field-wide audit, per split}
\label{app:field}
Table~\ref{tab:fieldfull} gives the per-split values behind the field audit of Section~\ref{sec:exp-field}, and Figure~\ref{fig:liftsummary} plots the lift of the best learned localizer across the nine settings that carry one.
For each split the three content-blind rules are scored on that split's own gold positions. The best is then re-selected on a random half and scored on the other half, over $2000$ splits, which removes selection bias. A run of length $n_\tau$ gives a random guess a chance of $1/n_\tau$, so the random baseline is $\mathbb{E}_\tau[1/n_\tau]$.
Mean selection bias across the 13 splits is $0.008$. In-sample and held-out ceilings are therefore close everywhere, so the correction matters for the principle rather than for these particular numbers.
ProcessBench contributes only its erroneous samples, since a correct solution has no first error to localize. Its four subsets are treated as separate splits, because they differ in solution length and generator.

\begin{figure}[tb]
\centering
\includegraphics[width=0.82\linewidth]{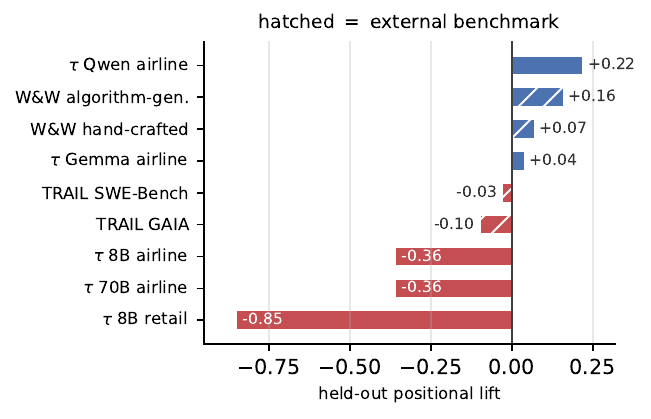}
\caption{Positional lift of the best learned localizer across nine attribution settings, sorted. Hatching marks external benchmarks.}
\label{fig:liftsummary}
\end{figure}

\begin{table}[h]
\centering
\small
\resizebox{\columnwidth}{!}{%
\begin{tabular}{llrrrrr}
\toprule
Benchmark & split & $n$ & steps & pos. & ceil. & rand. \\
\midrule
\multirow{5}{*}{SILENT-Bench}
 & $\tau$ 70B airline   & $39$  & $8.6$  & $0.84$ & $0.666$ & $0.138$ \\
 & $\tau$ 8B airline    & $34$  & $5.5$  & $0.98$ & $0.884$ & $0.502$ \\
 & $\tau$ 8B retail     & $110$ & $4.8$  & $0.99$ & $0.964$ & $0.309$ \\
 & $\tau$ Qwen airline  & $47$  & $9.4$  & $0.66$ & $0.106$ & $0.111$ \\
 & $\tau$ Gemma airline & $27$  & $9.6$  & $0.82$ & $0.400$ & $0.106$ \\
\midrule
\multirow{2}{*}{Who\&When}
 & hand-crafted         & $58$  & $51.6$ & $0.46$ & $0.106$ & $0.042$ \\
 & algorithm-gen        & $126$ & $8.7$  & $0.37$ & $0.141$ & $0.120$ \\
\midrule
\multirow{2}{*}{TRAIL}
 & GAIA                 & $113$ & $31.2$ & $0.53$ & $0.134$ & $0.057$ \\
 & SWE-Bench            & $30$  & $34.2$ & $0.26$ & $0.000$ & $0.034$ \\
\midrule
\multirow{4}{*}{ProcessBench}
 & GSM8K                & $207$ & $5.3$  & $0.42$ & $0.191$ & $0.208$ \\
 & MATH                 & $594$ & $6.8$  & $0.39$ & $0.227$ & $0.181$ \\
 & OlympiadBench        & $661$ & $8.9$  & $0.43$ & $0.170$ & $0.136$ \\
 & OmniMATH             & $759$ & $8.6$  & $0.39$ & $0.157$ & $0.138$ \\
\bottomrule
\end{tabular}}
\caption{Per-split positional audit. \emph{ceil.} held-out positional ceiling, \emph{rand.} random baseline, \emph{pos.} mean gold relative position.}
\label{tab:fieldfull}
\end{table}

\subsection{Method arms on ProcessBench}
\label{app:fieldmethods}
A discriminant check asks whether the measure credits genuine skill rather than only penalising methods.

To check that the standard credits genuine skill rather than only penalizing methods, two localizers are run on ProcessBench.
In this benchmark the positional ceiling sits at the random baseline and cannot carry a method on its own.
Two judges from different model families, Qwen2.5-7B and Mistral-Small-24B, are shown the numbered solution steps and asked for the index of the first wrong step.
They are prompted in one-based indexing and their answers are converted on parse.
For each subset, 200 erroneous instances are sampled, and neither judge produces unparseable replies.
The label-free text localizer is the same one used on Who\&When.

Both judges score above the ceiling on every subset, while the label-free heuristic does not (Table~\ref{tab:fieldmethods}).
Their mean predicted position is $0.41$ to $0.55$, near the middle of the sequence, so the gain is not itself a positional strategy.

\begin{table}[h]
\centering
\small
\resizebox{\columnwidth}{!}{%
\begin{tabular}{lccccccc}
\toprule
 & & \multicolumn{2}{c}{Qwen-7B} & \multicolumn{2}{c}{Mistral-24B} & \multicolumn{2}{c}{label-free} \\
\cmidrule(lr){3-4}\cmidrule(lr){5-6}\cmidrule(lr){7-8}
Subset & ceil. & acc & lift & acc & lift & acc & lift \\
\midrule
GSM8K         & $0.193$ & $0.520$ & $+0.327$ & $0.485$ & $+0.292$ & $0.220$ & $+0.027$ \\
MATH          & $0.189$ & $0.380$ & $+0.191$ & $0.330$ & $+0.141$ & $0.165$ & $-0.024$ \\
OlympiadBench & $0.184$ & $0.320$ & $+0.136$ & $0.275$ & $+0.091$ & $0.165$ & $-0.019$ \\
OmniMATH      & $0.225$ & $0.330$ & $+0.105$ & $0.320$ & $+0.095$ & $0.175$ & $-0.050$ \\
\bottomrule
\end{tabular}}
\caption{Discriminant check on ProcessBench ($200$ erroneous instances per subset, no unparseable replies). Both judge families earn positive lift on every subset, while a content-blind-in-practice text heuristic does not.}
\label{tab:fieldmethods}
\end{table}

\subsection{Audit restricted to well-posed failures}
\label{app:wellposed}
Restricting the audit to failures that have a replay-verifiable decisive step isolates the effect of the labelling rule.

Recoverability is checked at every prefix of a failing run. The question at each prefix is whether the
remaining gold write actions can still reproduce the target state. A run has a state-verified decisive
step exactly when the answer flips from yes to no. The check reuses the
environment's own transition and hashing functions, and the user simulator is stubbed, since nothing
here depends on the user's text.

Of the $596$ failures in the regenerated corpus, $66$ flip and $530$ do not
(Table~\ref{tab:wellposed}). The greedy corpus gives $39$ of $257$, so the fraction is stable across
vintages. It is strongly domain-dependent, at $36.4\%$ on airline against $2.3\%$ on retail. Retail is
also the split with the most saturated positional ceiling, at $0.964$. A split's ceiling is largely a
readout of how ill-posed it is.

The positional rules are then scored on the well-posed subset alone, with the state-verified step as gold. That gives a pooled held-out ceiling of $0.442$ $[0.289, 0.605]$ against a random baseline of $0.425$. The
interval contains the random baseline, so the ceiling collapses to something indistinguishable from
chance. Under the conventional give-up gold the same splits reach ceilings up to $0.964$. The gold's
mean relative position moves from $0.84$--$1.00$ to $0.42$, which is what the collapse reflects.

The subset is small at $n{=}38$, so it supports the claim about the ceiling rather than a ranking of
localizers on it.

\begin{table}[h]
\centering
\small
\resizebox{\columnwidth}{!}{%
\begin{tabular}{lccc}
\toprule
Corpus / domain & failures & state-verified & \% \\
\midrule
Regenerated, airline & $154$ & $56$ & $36.4$ \\
Regenerated, retail  & $442$ & $10$ & $2.3$ \\
\midrule
Greedy pull, all splits & $257$ & $39$ & $15.2$ \\
\midrule
Regenerated, total   & $596$ & $66$ & $11.1$ \\
\bottomrule
\end{tabular}}
\caption{Failures admitting a state-verified decisive step, by corpus and domain.}
\label{tab:wellposed}
\end{table}

\subsection{Recoverability in two further environments}
\label{app:tau2}
\label{app:swebench}
Two further environments test whether the recoverability fraction is specific to one reward.

$\tau^2$-bench \citep{tau2bench2025} scores with agent-database and user-database checks, environment assertions and
required-action checks rather than a single hash, and ships full simulation results from frontier
agents. The same recoverability test is expressed in its own machinery. Replay a truncated message
history, apply the task's golden actions, and compare database hashes against a gold environment,
cutting only at complete turn boundaries.

A positive control gates the probe, since at full trajectory length it must reproduce the benchmark's
own scoring. Which comparison passes that control differs by domain (Table~\ref{tab:environments}),
and airline fails its control, so it is not reported.

The third setting is a coding agent on SWE-bench Verified \citep{swebench2024,swebenchverified2024}, where the state is a repository and success is a deterministic
test suite. Each trajectory is replayed prefix by prefix inside the benchmark's own container image,
applying the reference repair after every cut and re-running the tests.

The resulting fractions span the range reported in the main text, from $7.6\%$ on the coding agent to
$87.4\%$ on telecom, against $11.1\%$ on the corpus used elsewhere here. The two $\tau^2$ domains
also differ from each other (Fisher $p<0.001$), so the fraction varies within a benchmark as well as
between benchmarks.

Two readings of the coding-agent repair give very different answers. The strict reading above
requires the repaired workspace to pass. Requiring only that the reference patch still apply, which is
what a patch-level tool would see, leaves $52.4\%$ unrecoverable. A swing of that size from the
definition alone is a further reason to treat this fraction as measured rather than assumed.

The strict reading also supplies the audit row of Table~\ref{tab:attribution}, localizing 120 of the
198 replayed trajectories with the decisive step mid-trajectory at mean relative position $0.61$.
That is the property the $\tau$-bench splits lack, and even so a content-blind rule reaches
$3.3\times$ random and the label-free localizer does not clear it.

\begin{table}[h]
\centering
\small
\resizebox{\columnwidth}{!}{%
\begin{tabular}{lllcc}
\toprule
Benchmark & Domain & Control & $n$ & \% recov. \\
\midrule
$\tau^2$-bench & retail  & $116/120$ & $60$  & $63.3$ $[0.51,0.74]$ \\
$\tau^2$-bench & telecom & $100/100$ & $199$ & $87.4$ $[0.82,0.91]$ \\
$\tau^2$-bench & airline & $80/100$  & --    & not reported \\
\midrule
SWE-bench & code & $198/203$ & $198$ & $7.6$ $[0.04,0.12]$ \\
\bottomrule
\end{tabular}}
\caption{Recoverability in two further environments. \emph{Control} is the share of full-length trajectories on which the probe reproduces the benchmark's own scoring. A domain is reported only if its control passes.}
\label{tab:environments}
\end{table}

\subsection{TRAIL external attribution audit}
\label{app:trail}
TRAIL \citep{trail2025} supplies 143 real agent traces with human-annotated error spans, split into
information-retrieval and software-engineering sources.

The gold step here is non-positional, with mean relative position $0.53$ and $0.26$, and the traces
are long at about 32 spans. The positional baselines are therefore weak, at $0.10$ and $0.03$ (Table~\ref{tab:trail}).

Both methods still fall below those baselines. A GPT-4o judge given each step's name, status and
attributes lands within two steps of the gold on $18\%$ of traces, but never hits it exactly. The label-free text method that
comes closest to a gain on Who\&When falls clearly below the baseline here, at pooled lift $-0.077$
with paired interval $[-0.126, -0.035]$.

A non-positional gold makes step finding possible but not easy, and the marginal Who\&When result
does not carry over to long real traces.

\begin{table}[h]
\centering
\small
\resizebox{\columnwidth}{!}{%
\begin{tabular}{lccccc}
\toprule
TRAIL split & pos. & best pos. & learned & lift & PSS \\
\midrule
GAIA (retr.) & $.53$ & mid $0.10$ & $0.00$ & $-0.10$ & $-0.11$ \\
SWE (code)   & $.26$ & mid $0.03$ & $0.00$ & $-0.03$ & $-0.03$ \\
\bottomrule
\end{tabular}}
\caption{TRAIL positional-lift audit ($143$ real agent traces). A GPT-4o judge fails to clear the weak positional baselines despite a non-positional gold.}
\label{tab:trail}
\end{table}

\subsection{GPT-4o judge predicted positions}
\label{app:judgepos}
The give-up gold on the 70B airline split is late, at mean relative position $0.87$.
Table~\ref{tab:judgepos} splits the 39 failing traces by length and reports where the GPT-4o judge
places the decisive step, and Figure~\ref{fig:judgepos} shows the same distribution. On traces of at least six steps it picks one of the first two steps in 18
of 32 cases, and hits the late gold in only 6. On traces of at most five steps, where early and late
nearly coincide, it matches all 7. The judge is looking for an early root cause rather than the late
give-up point, and inspecting every step does not save it from losing to the always-last rule.

The prompt asks for the decisive step but never says that an omission's gold is the give-up point.
Part of that gap is therefore a mismatch in definitions rather than a failure to read the run. Table~\ref{tab:promptablation}
tests this within one model, scoring an open judge on each airline split under the original prompt and
under one that states the convention. That change moves the judge's mean predicted position from early to late ($0.14\to0.54$ at 8B,
$0.28\to0.73$ at 70B). Accuracy roughly doubles and no trace gets worse, yet the judge still trails
always-last on both splits.

An open judge is used for that comparison so the prompt is the only thing that changes. The GPT-4o
judge keeps the original prompt in the main text, so its scores stay comparable to prior judge
baselines.

\begin{figure}[h]
\centering
\includegraphics[width=0.92\linewidth]{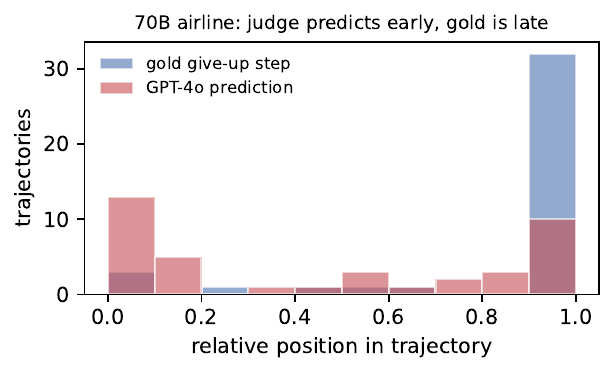}
\caption{Relative position of the gold give-up step and the GPT-4o judge's prediction, 70B airline split.}
\label{fig:judgepos}
\end{figure}

\begin{table}[h]
\centering
\small
\resizebox{\columnwidth}{!}{%
\begin{tabular}{lccc}
\toprule
Trace length & $n$ & pred.\ in first two & exact match \\
\midrule
Long ($\geq6$ steps)  & $32$ & $18$ ($56\%$) & $6$ ($19\%$) \\
Short ($\leq5$ steps) & $7$  & $1$           & $7$ ($100\%$) \\
\bottomrule
\end{tabular}}
\caption{GPT-4o judge predicted decisive-step positions on the 70B airline split, by trace length (gold give-up step is late, mean relative position $0.87$).}
\label{tab:judgepos}
\end{table}

\begin{table}[h]
\centering
\small
\resizebox{\columnwidth}{!}{%
\begin{tabular}{lccccc}
\toprule
Split & prompt & acc & pred.\ pos. & last & lift \\
\midrule
\multirow{2}{*}{8B airline} & naive & $0.41$ & $0.14$ & \multirow{2}{*}{$0.88$} & $-0.47$ \\
 & aware & $0.85$ & $0.54$ & & $-0.03$ \\
\midrule
\multirow{2}{*}{70B airline} & naive & $0.05$ & $0.28$ & \multirow{2}{*}{$0.67$} & $-0.62$ \\
 & aware & $0.31$ & $0.73$ & & $-0.36$ \\
\bottomrule
\end{tabular}}
\caption{Give-up-aware prompt ablation, open judge (Mistral-Small-24B). \emph{pred.\ pos.} mean predicted relative position, \emph{last} always-last accuracy.}
\label{tab:promptablation}
\end{table}

\section{Representation analysis}
\subsection{Geometry of the internal signal}
\label{app:geometry}
Two checks ask whether the failure signal sits in one direction or in many.

The first restricts the standardized distance to the top-$k$ trusted principal directions, and
separately to the complementary tail. The separation lives mostly outside the main trusted
directions, with the full distance at $0.842$ on the pooled agent corpus.

The second tests an explicit one-dimensional detector, the maximum projection onto the top trusted
direction, against the full-space distance. The full distance wins on four of five datasets with
recorded activations. GSM8K with Qwen is the single exception, where one direction suffices.

A sparse dictionary gives the same answer. Table~\ref{tab:sae} reports the main dictionaries, and the
verdict holds across eighteen of them, spanning six datasets, three model families, five widths and
three sparsity levels. Dictionaries are fitted on standardized trusted steps only, so they are
label-free, and labels enter only when scoring which latents separate. Sparsity is architectural
rather than penalised, keeping the $32$ largest pre-activations per step, because an $L_1$ penalty
did not control sparsity reliably at this data scale. Widths run from $256$ to $28672$ latents.

Because the firing criterion is a threshold, the best single-latent AUROC is also reported, which
needs none. It reaches only $0.38$ to $0.68$, never matching the full distance. Control dictionaries fitted on trusted and failing steps together, and on failing steps alone, still
produce no failure-specific latent. The best of an equal budget of random directions separates about
as well as the best learned latent.

The signal is therefore spread over many weakly shifting directions rather than stored in one unit.

\subsection{The label-free property: statement and proof}
\label{app:labelfree}
\begin{definition}[Label-free monitor]
\label{def:labelfree}
A monitor with parameters $\theta$ and operating threshold $\theta_{\text{thr}}$ fit by an estimator $\hat E$ is \emph{label-free} if $\hat E$ is measurable with respect to the trusted successful trajectories $\mathcal{B}$ alone, with no failure label $y$ entering any computation up to and including threshold-setting:
\[
\begin{aligned}
\theta&=\hat E(\mathcal{B}), \qquad \hat E \perp y,\\
\theta_{\text{thr}}&=Q_{1-\alpha}(\{D(b):b\in\mathcal{B}\}).
\end{aligned}
\]
\end{definition}

\begin{proposition}
\label{prop:labelfree}
Equal-weight $z$-score fusion (Eq.~\ref{eq:detect}) is label-free, and any signal weighting fit to a label-dependent objective is not.
\end{proposition}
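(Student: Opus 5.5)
The plan is to verify Definition~\ref{def:labelfree} directly, by tracing every fitted quantity in the pipeline of Eq.~\ref{eq:detect} back to its inputs. For the positive part, I list the parameters of the equal-weight fusion: the per-dimension reference moments $(\mu,\sigma)$ of the internal signal, the trusted text centroid used by the SBERT distance, the standardisation constants $(\bar s_{\text{ben}},\hat\sigma_{\text{ben}})$ for each signal $s$, the fusion weights, and the threshold $\theta_{\text{thr}}$. I then show that each one is a deterministic function of $\mathcal{B}$ alone. The moments $(\mu,\sigma)$ are empirical means and standard deviations over steps of trusted runs. The centroid is an average of embeddings of trusted runs. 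The per-step score $z_t$ and the run score $\max_t z_t$ are fixed maps of a trajectory once $(\mu,\sigma)$ are given. The tool-error flag is a fixed string-matching oracle with no fitted parameters. The constants $\bar s_{\text{ben}},\hat\sigma_{\text{ben}}$ are moments of these scores over $b\in\mathcal{B}$. The weights are the constant $1$, chosen before any data are seen. Composing these measurable maps gives $\theta=\hat E(\mathcal{B})$, with $y$ appearing in no argument. Finally, $D(b)$ for $b\in\mathcal{B}$ depends only on $\theta$ and $b$, so $\theta_{\text{thr}}=Q_{1-\alpha}(\{D(b)\})$ is again a function of $\mathcal{B}$, and $\alpha$ is a fixed constant.

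Two details need care. The first is the leave-one-out variant: scoring $b$ against $\mathcal{B}\setminus\{b\}$ is still a function of $\mathcal{B}$, so it does not break the property. The second is that membership in $\mathcal{B}$ is itself determined by downstream outcomes. I would read the definition as conditioning on $\mathcal{B}$ as given, so that $\hat E\perp y$ means no label of any non-trusted run, and no label beyond the membership that defines $\mathcal{B}$, enters the computation. I will state this convention explicitly, since it is what makes the positive claim non-vacuous.

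For the negative part, take a weighting $w=\arg\min_w \mathcal{L}(w;\{(D_w(\tau),y_\tau)\})$, where $\mathcal{L}$ is label-dependent, meaning its value actually varies with $y$ on some data. I would exhibit two datasets with the same trajectories and the same $\mathcal{B}$ but different failure labels on the non-trusted runs, for which the minimiser differs. For example, with two signals, flipping labels reverses which signal has higher AUROC or logistic likelihood, and so reverses the preferred weight. Then $w$ is not a function of $\mathcal{B}$ alone, so $\theta$ is not $\hat E(\mathcal{B})$, and the monitor violates the definition.

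I expect the main obstacle to be making "label-dependent objective" precise enough for the negative claim to hold as a theorem rather than a tautology. A degenerate objective can depend on $y$ nominally while always returning the same minimiser. I would therefore define label-dependence as the minimiser itself varying with $y$ for some fixed input, and prove the claim under that definition, noting that logistic fusion, the supervised skyline of Appendix~\ref{app:skyline}, satisfies it via the explicit two-signal construction above.
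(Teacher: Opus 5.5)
Your positive half follows the paper's argument: the paper's proof notes that $\bar s_{\text{ben}}$, $\hat\sigma_{\text{ben}}$ and $\theta_{\text{thr}}$ are functions of $\mathcal{B}$ and that the weights are constants. Your version is more complete. It also traces $(\mu,\sigma)$ and the text centroid, it covers the leave-one-out scoring, and it states the convention that membership in $\mathcal{B}$ is taken as given. The paper leaves all three implicit.

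The converse is where you genuinely depart from the paper. The paper says $w^\star=\arg\max_w \mathrm{Obj}(w;\{(D_w(\tau_i),y_i)\})$ is ``by construction a measurable function of the labels,'' and that a single gradient step makes $\theta$ depend on $y$. That does not establish the conclusion. Being measurable with respect to $(\mathcal{B},y)$ does not rule out also being measurable with respect to $\mathcal{B}$ alone. For instance, an objective of the form $f(w)+g(y)$ depends on $y$, but its optimiser and its gradient in $w$ do not. The paper's step therefore tacitly assumes the non-degeneracy that you make explicit. Your witness construction is exactly what is needed to show that $\hat E$ does not factor through $\mathcal{B}$: identical trajectories, identical $\mathcal{B}$, labels differing only on non-trusted runs, and different optimisers. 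It also correctly weakens ``any label-dependent objective'' to objectives whose optimiser actually moves with $y$, which is the strength at which the claim is true. The paper's route buys brevity; yours buys a statement that is actually a theorem.

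When you write out the two-signal example, choose an objective with a well-defined optimiser. An $\ell_2$-regularised logistic loss works, as does AUROC over a finite weight grid with a fixed tie-break. Unregularised logistic loss has no minimiser on separable data, and the AUROC maximiser is set-valued. Also note that keeping $\mathcal{B}$ fixed while changing labels requires some untrusted successes among the non-trusted runs. The definition permits this, since $\mathcal{B}$ is only a subset of the successful runs.
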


\begin{proof}[Proof of Proposition~\ref{prop:labelfree}]
Every fitted quantity in $D$, namely $\bar s_{\text{ben}}$, $\hat\sigma_{\text{ben}}$, and the trusted quantile $\theta_{\text{thr}}$, is a function of $\mathcal{B}$ only, and the weights are fixed constants, so $\hat E\perp y$ and $D$ is label-free by Definition~\ref{def:labelfree}.
Conversely, any weighting $w^\star=\arg\max_w \mathrm{Obj}(w;\{(D_w(\tau_i),y_i)\})$ that maximizes a failure-labeled objective (AUROC, cross-entropy, or $F_1$ against $y$) is by construction a measurable function of the labels $y_i$. Even a single gradient step on such a loss makes $\theta$ depend on $y$, which breaks the definition.
\end{proof}

\subsection{Positional skill score: axioms and proofs}
\label{app:pss}
A step-attribution measure $M$ is required to satisfy:
\textbf{(A1) prior-insensitivity}, $M(f)\le0$ for any $f$ in the reference class $\mathcal{P}$;
\textbf{(A2) monotonicity}, $M(f)\ge M(f')$ if $f$ is correct on a superset of the trajectories $f'$ is;
\textbf{(A3) content-information soundness}, $M(f)>0$ only if $f$ uses information beyond the positions in $\mathcal{P}$;
\textbf{(A4) chance calibration}, $M$ reduces to accuracy-minus-chance under a uniform content-independent gold.

A3 is one-directional by design, so a positive score certifies that content was used. A non-positive
score does not certify the reverse, since a localizer can read content and still fail to clear a
strong positional rule.
A negative lift therefore means no skill was demonstrated against this reference, not that the method
ignores content.
Both guarantees are relative to the reference class $\mathcal{P}$. A content-blind rule outside it, such
as one that always names the third step, could in principle exceed $A^\star_{\text{pos}}$. A richer
reference class therefore gives a stronger guarantee at the cost of a higher ceiling.

Two larger content-blind families are scored on Who\&When
and ProcessBench under the same half-split held-out protocol as the main audit (Table~\ref{tab:bestindex}).
The first picks the single best fixed step number $k$ and always predicts step $\min(k, n-1)$. The second maps each run length to the gold step that is most common at that length in the fitting
half. That is the strongest rule the gold positions alone can support.

Both raise the reference on every split. The sharpest rise is on Who\&When algorithm-generated, where
always naming the second step scores $0.260$ held-out against $0.142$ for the three-rule family. Under
that harder reference the label-free localizer's lift on that split falls from $+0.14$ to $+0.04$, and
the benchmark's own best published result ($0.254$) drops below the content-blind bar. The specialist
systems that clear by a wide margin still clear, at $0.516$ for RAFFLES, $0.429$ for AgenTracer and
$0.381$ for CORRECT. On ProcessBench the reference rises by one to eight points, and every judge that
cleared keeps a positive margin, since those judges sit between $0.32$ and $0.52$. The permutation
tests are not re-run against this larger family.

The three-rule family stays the reference in the main text because it is what published methods
implicitly compete against. The strength of the A1 to A3 guarantee is nonetheless exactly the size of
that family, and Table~\ref{tab:bestindex} shows how far the bar moves when the family grows.

\begin{table}[h]
\centering
\small
\resizebox{\columnwidth}{!}{%
\begin{tabular}{lrcccc}
\toprule
Split & $n$ & 3-rule & fixed idx & length map & best $k$ \\
\midrule
W\&W hand-crafted  & $58$  & $0.104$ & $0.134$ & $0.116$ & $8$ \\
W\&W algorithm-gen & $126$ & $0.142$ & $0.260$ & $0.200$ & $1$ \\
PB gsm8k           & $207$ & $0.191$ & $0.276$ & $0.255$ & $1$ \\
PB math            & $594$ & $0.227$ & $0.240$ & $0.261$ & $1$ \\
PB olympiadbench   & $661$ & $0.170$ & $0.221$ & $0.203$ & $2$ \\
PB omnimath        & $759$ & $0.157$ & $0.238$ & $0.241$ & $1$ \\
\bottomrule
\end{tabular}}
\caption{Larger content-blind reference families, half-split held-out accuracy averaged over $2000$ splits. \emph{3-rule} is the paper's family $\mathcal{P}$, \emph{fixed idx} the best absolute index, \emph{length map} the length-conditional table, \emph{best $k$} the in-sample best fixed index (zero-indexed).}
\label{tab:bestindex}
\end{table}

\begin{proposition}
\label{prop:pss-axioms}
Positional lift $L(f)=\mathrm{Acc}(f)-A^\star_{\text{pos}}$ satisfies \textnormal{A1--A4}, whereas raw accuracy violates \textnormal{A1} and \textnormal{A3}.
\end{proposition}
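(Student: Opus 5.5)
The plan is to verify each axiom directly from the definition $L(f)=\mathrm{Acc}(f)-A^\star_{\text{pos}}$, where $A^\star_{\text{pos}}=\max_{p\in\mathcal{P}}\mathrm{Acc}(p)$ is computed on the same set of trajectories and the same gold labels as $\mathrm{Acc}(f)$. I will then exhibit counterexamples for raw accuracy. Throughout I treat $A^\star_{\text{pos}}$ as the in-sample maximum. The held-out variant only shifts the constant, by the selection bias already discussed in Section~\ref{sec:recipe}, and I will note this at the end.

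\textbf{A1, A2 and A4.} For A1, take any $f\in\mathcal{P}$. Then $\mathrm{Acc}(f)\le\max_{p\in\mathcal{P}}\mathrm{Acc}(p)=A^\star_{\text{pos}}$, so $L(f)\le 0$. For A2, $A^\star_{\text{pos}}$ does not depend on $f$, so $L(f)-L(f')=\mathrm{Acc}(f)-\mathrm{Acc}(f')$. If $f$ is correct on a superset of the trajectories on which $f'$ is correct, this difference is a nonnegative count divided by $|\mathcal{T}|$. For A4, suppose the gold index $g_\tau$ is uniform on $\{0,\dots,n_\tau-1\}$ and independent of content. Then every fixed rule $p\in\mathcal{P}$ is correct on run $\tau$ with probability $1/n_\tau$, so each has expected accuracy $\mathbb{E}[1/n_\tau]$. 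This is exactly the uniform-random baseline of Section~\ref{sec:recipe}. Hence $A^\star_{\text{pos}}$ equals chance, and $L$ reduces to accuracy minus chance. I will state A4 in expectation over the gold, or in the population limit. This is needed because a finite-sample maximum over three rules is biased slightly upward.

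\textbf{A3, the main obstacle.} The difficulty is formalising ``uses information beyond the positions in $\mathcal{P}$.'' My plan is to read it as: $f$ uses such information unless its prediction on every run coincides with some rule of $\mathcal{P}$ chosen once, independently of content. I then prove the contrapositive. If $f$ uses no such information, then $f\in\mathcal{P}$, and A1 gives $L(f)\le 0$. I need to be careful about mixtures that randomise over $\mathcal{P}$, either per run or via a content-independent coin. Their expected accuracy is a convex combination of the $\mathrm{Acc}(p)$, so it is still at most the maximum. This extends A1 to the convex hull of $\mathcal{P}$ and closes A3 for that reading. I will also state explicitly that the guarantee is relative to $\mathcal{P}$, consistent with the caveat in the text about rules such as ``always the third step.''

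\textbf{Raw accuracy.} For raw accuracy I use the omission-heavy setting. On a split where every gold label is the last step, always-last is in $\mathcal{P}$ and has accuracy $1>0$. This violates A1, and it also violates A3, since a content-blind rule receives positive credit. The retail split in Table~\ref{tab:attribution} is a concrete instance.
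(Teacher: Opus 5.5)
Your proof is correct and follows the paper's own argument axiom by axiom: maximality of $A^\star_{\text{pos}}$ for A1, independence of the reference from $f$ for A2, A3 reduced to A1 (the paper states the same implication directly, that $L(f)>0$ means $f$ beats every rule in $\mathcal{P}$), A4 via every positional rule having accuracy $\mathbb{E}[1/n_\tau]$ under uniform gold, and the always-last rule on omission-heavy splits as the counterexample for raw accuracy. Your additions are sound refinements of the paper's sketch: the convex-hull extension to randomised rules, and stating A4 at the population level rather than for the upward-biased finite-sample maximum. Keeping $A^\star_{\text{pos}}$ as the in-sample maximum is also the right choice. With a held-out reference, the best positional rule can score slightly above zero, so A1 would hold only up to the selection bias.
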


\begin{proof}
(A1) For content-blind $f\in\mathcal{P}$, $\mathrm{Acc}(f)\le A^\star_{\text{pos}}$ by maximality, so $L(f)\le0$.
(A2) $A^\star_{\text{pos}}$ is independent of $f$ and $\mathrm{Acc}$ is monotone under prediction-superset, hence so is $L$.
(A3) $L(f)>0\implies\mathrm{Acc}(f)>A^\star_{\text{pos}}$, so $f$ beats every rule in $\mathcal{P}$, which requires information those rules do not have.
The converse does not hold and is not claimed. A content-reading $f$ with $\mathrm{Acc}(f)\le A^\star_{\text{pos}}$ scores $L(f)\le0$.
(A4) is Proposition~\ref{prop:pss-unbiased}.
Raw accuracy assigns the always-last rule $0.67$--$0.96$, violating A1, and would credit a method that never beats position, violating A3.
\end{proof}

\begin{proposition}[Unbiasedness under uniform gold]
\label{prop:pss-unbiased}
If $g\sim\mathrm{Uniform}\{1,\dots,n\}$ independently of content, then $A^\star_{\text{pos}}=\mathbb{E}_\tau[1/n_\tau]$ (the random baseline) and $L(f)=\mathrm{Acc}(f)-\mathbb{E}_\tau[1/n_\tau]$, an unbiased estimator of content-information gain over chance.
\end{proposition}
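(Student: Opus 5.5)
The plan is to work at the level of expectations over the gold draw, conditioning on the run lengths. Fix a trajectory $\tau$ of length $n_\tau$. Each rule in the reference class $\mathcal{P}$ (always-first, always-middle, always-last) outputs an index that depends on $n_\tau$ alone, and that index always lies in $\{1,\dots,n_\tau\}$; the paper's zero-based indexing is only a relabelling. The gold $g$ is uniform on these $n_\tau$ positions and independent of content, and hence independent of the rule's prediction. So for every $p\in\mathcal{P}$,
\[
\Pr[p(\tau)=g_\tau \mid n_\tau]=1/n_\tau .
\]
Averaging over trajectories then gives $\mathbb{E}[\mathrm{Acc}(p)]=\mathbb{E}_\tau[1/n_\tau]$ for each of the three rules. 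More generally, the same identity holds for any content-blind rule, including randomised ones, as long as its prediction depends only on $n_\tau$ and independent randomness. This is the uniform-random baseline of Section~\ref{sec:recipe}.

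The second step is to identify $A^\star_{\text{pos}}$. At the population level all three rules share the same expected accuracy, so their maximum equals $\mathbb{E}_\tau[1/n_\tau]$. The step I expect to be the main obstacle is the finite sample. The in-sample maximum of three empirical accuracies is biased upward, since $\mathbb{E}\max\ge\max\mathbb{E}$, so the identity cannot hold for the naive in-sample ceiling. I would resolve this with the held-out protocol of Section~\ref{sec:recipe}. The rule is chosen on one random half, so it is a function of that half only. Conditional on that choice, its accuracy on the disjoint other half has expectation $\mathbb{E}_\tau[1/n_\tau]$ by the first step, because gold draws are independent across trajectories. Averaging over splits preserves this. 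Hence the held-out $A^\star_{\text{pos}}$ is an unbiased estimator of $\mathbb{E}_\tau[1/n_\tau]$, and in the population reading the two are equal. I would state explicitly that the proposition is meant in this held-out or population sense, and note that the in-sample version holds only up to the selection bias, which Appendix~\ref{app:field} measures as $0.008$.

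Finally, linearity of expectation gives
\[
\mathbb{E}[L(f)]=\mathbb{E}[\mathrm{Acc}(f)]-\mathbb{E}_\tau[1/n_\tau].
\]
This is accuracy minus chance, which is axiom A4 as invoked in Proposition~\ref{prop:pss-axioms}. To justify calling it a \emph{content-information gain}, I would combine this with the first step: every content-blind $f$ has $\mathbb{E}[L(f)]=0$. So any positive expected lift can come only from dependence of $f$ on content correlated with $g$. By contrast, a non-positive lift says nothing about whether $f$ reads content, which matches the one-directional reading of A3.
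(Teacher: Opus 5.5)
Your proof is correct and follows the paper's argument at its core. Each rule in $\mathcal{P}$ depends only on $n_\tau$, so it hits a uniform gold with probability $1/n_\tau$; all three rules therefore share expected accuracy $\mathbb{E}_\tau[1/n_\tau]$, and their maximum is that same constant. The paper's proof stops at this population-level identity, so your additions sharpen the argument rather than change it: you require each prediction to lie in the gold's support (read literally, the paper's $\mathrm{round}(\rho n)$ with one-based gold puts always-first at index $0$), and you note that only the held-out or population ceiling is unbiased, not the in-sample maximum, which is what justifies the ``unbiased estimator'' wording via linearity.
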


\begin{proof}
For a single predicted position $p=\mathrm{round}(\rho n)$, uniform $g$ gives $P(g{=}p)=1/n$.
Taking expectation over $\tau$ gives $\mathrm{Acc}(\pi_\rho)=\mathbb{E}[1/n_\tau]$ for every $\rho$, so the maximum over $\mathcal{P}$ is the same constant, equal to chance.
\end{proof}

\noindent
The normalized $\mathrm{PSS}=L/(1-A^\star_{\text{pos}})$ is the skill-score form of $L$ and is the quantity maximized in expectation, with the reference held fixed, by reporting the true $\arg\max_t P(g{=}t\mid\tau)$.
It is therefore proper in the skill-score sense.
PSS is unstable as $A^\star_{\text{pos}}\to1$ (the retail case), in which small lift maps to a large magnitude.
It is most informative on benchmarks whose positional baseline is not saturated.
A floored variant removes the blow-up while preserving the ordering. Replacing the denominator with
$\max(1-A^\star_{\text{pos}},\epsilon)$ leaves A1 to A4 intact, since it rescales a positive quantity
and does not change any sign. At $\epsilon=0.10$ the worst score over the 30 audited pairs moves from $-23.3$ to $-8.5$. The rank
correlation with the unfloored score is $1.00$, both over all pairs and over the subset whose
denominator is already well behaved. The two forms therefore agree wherever PSS
is interpretable, and only the saturated splits change. Table~\ref{tab:pssfull} gives both forms
for the splits of Table~\ref{tab:attribution}.

\begin{table}[h]
\centering
\small
\resizebox{\columnwidth}{!}{%
\begin{tabular}{lccc}
\toprule
Benchmark / split & PSS & $95\%$ CI & floored ($\epsilon{=}0.10$) \\
\midrule
$\tau$ 8B airline    & $-3.0$  & $[-14.0, -1.1]$ & $-3.0$ \\
$\tau$ 8B retail     & $-23.3$ & $[-97.0, -10.9]$ & $-8.5$ \\
$\tau$ 70B airline   & $-1.1$  & $[-2.5, -0.4]$ & $-1.1$ \\
$\tau$ Qwen airline  & $+0.20$ & -- & $+0.20$ \\
$\tau$ Gemma airline & $+0.06$ & $[0.00, 0.20]$ & $+0.06$ \\
W\&W hand-crafted    & $+0.06$ & $[0.00, 0.14]$ & $+0.06$ \\
W\&W algorithm-gen   & $+0.17$ & -- & $+0.17$ \\
SWE-bench Verified   & $-0.03$ & $[-0.08, 0.03]$ & $-0.03$ \\
\bottomrule
\end{tabular}}
\caption{Normalised skill score for the Table~\ref{tab:attribution} splits, unfloored and floored. Only the saturated retail split changes materially, and every sign is preserved.}
\label{tab:pssfull}
\end{table}

The reference itself carries selection bias, because $A^\star_{\text{pos}}$ is a maximum over three rules evaluated on the same split that is being scored. It is therefore an upward-biased estimate of the best population positional rule, and lift is biased downward to match.
The bias is largest on small splits, which is where the audit's negative results sit, so it has to be measured rather than assumed away.
The rule is instead selected on a random half of each split and scored on the other half, averaged over $2000$ splits. That gives a mean bias of $0.011$ and a maximum of $0.046$ across the seven splits of Table~\ref{tab:attribution}.
No split changes the sign of its lift under the correction, and the largest corrected lift is $0.216$ against $0.170$ uncorrected.
The audit's conclusions are therefore not an artifact of selecting the reference on the test split.
Reporting lift against an always-last rule specified in advance, which requires no selection at all, changes no conclusion either.

\subsection{Reference sample complexity}
\label{app:samplecomplexity}
The trusted reference is the per-dimension mean $\mu\in\mathbb{R}^d$ and standard deviation $\sigma\in\mathbb{R}^d$ estimated from $N$ successful trajectories. Write $\hat\mu_N,\hat\sigma_N$ for the estimates, $\hat z_N$ for the resulting per-step score, and $z_\infty$ for the population score.
The argument rests on two assumptions.
\textbf{(B1)} Each standardized trusted coordinate $u_j=(h_j-\mu_j)/\sigma_j$ is sub-Gaussian with proxy variance at most $1$, meaning its tails fall off at least as fast as a Gaussian's.
Final-layer residual streams are known to carry a few very large-magnitude coordinates, so this assumption is checked rather than asserted.
Across the five datasets with recorded activations, the median per-coordinate excess kurtosis is between $-0.64$ and $0.08$, at or below the Gaussian value. The share of coordinates above $3$ runs from $0.4\%$ to $11.7\%$ (Table~\ref{tab:kurtosis}).
Per-coordinate standardization therefore removes most of the outlier-dimension effect, and the heavy-tailed coordinates that remain are a small minority.
\textbf{(B2)} The population score has a density bounded by $B_\rho$ near the decision region.

\begin{table}[h]
\centering
\small
\resizebox{\columnwidth}{!}{%
\begin{tabular}{lccc}
\toprule
Corpus & median & p$95$ & share $>3$ \\
\midrule
GSM8K Qwen-7B      & $0.04$  & $2.75$ & $4.1\%$ \\
FinQA Qwen-7B      & $0.08$  & $5.07$ & $11.7\%$ \\
$\tau$-bench 8B    & $-0.64$ & $0.90$ & $0.4\%$ \\
FinQA Llama-8B     & $-0.16$ & $1.46$ & $1.0\%$ \\
GSM8K Llama-8B     & $-0.19$ & $2.43$ & $3.4\%$ \\
\bottomrule
\end{tabular}}
\caption{Per-coordinate excess kurtosis of the standardized trusted activations, testing assumption (B1). A Gaussian coordinate has excess kurtosis $0$.}
\label{tab:kurtosis}
\end{table}

The argument itself is short enough to state as a remark, and its constants are far too loose to deserve the weight of a numbered proposition. Under (B1), coordinate-wise sub-Gaussian concentration with a union bound over the $d$ dimensions bounds the standardized mean error per coordinate by $O(\sqrt{\log(d/\delta)/N})$, and the variance estimate concentrates at the same rate. Expanding the squared score and bounding the cross term by Cauchy--Schwarz then bounds the per-trajectory score error by $\varepsilon_N=O(\sqrt{\log(d/\delta)/N})$ with probability $1-\delta$, with the error largest on the most anomalous trajectories. AUROC is a rank statistic, so it moves only through trusted--failing pairs whose ordering flips. A pair can flip only if its two population scores lie within twice the largest per-trajectory error $\Delta_N$ of each other. Under (B2) the probability of such a pair is at most $2B_\rho\Delta_N$, so $\lvert\mathrm{AUROC}_N-\mathrm{AUROC}_\infty\rvert\le\eta$ holds with probability $1-\delta$ once $N\gtrsim\log(d/\delta)/\eta^2$, with the constant improving with the trusted-versus-failing score margin.

The measured size sweep of Appendix~\ref{app:bensize} supports this simple scaling view.
As $N$ falls from $21$ to $3$, the noise proxy $\sqrt{\log d/N}$ (with $\log d\approx8.3$) grows by about a factor of $2.6$, from $0.63$ to $1.66$. The measured internal AUROC drops only $0.039$ over the same range.
For $d{=}4096$ and $\delta{=}\eta{=}0.05$, the bound with $C{=}1$ gives $N\ge\log(d/\delta)/\eta^2\approx 4500$, about two orders of magnitude above the roughly $10$ to $20$ successful runs the detector actually needs.
A guarantee that loose is not operational, so this stands as a scaling argument rather than a
sample-complexity result.
It predicts the shape of the degradation, error falling as $1/\sqrt{N}$ and growing only
logarithmically in dimension, not how much trusted data a deployment needs.
No constant is fitted, since doing so on four points would dress a shape argument as a calibrated bound.

\subsection{SILENT inference procedure}
\label{app:algo}
Algorithm~\ref{alg:silent} is the deployment procedure, where the scored trajectory $\tau$ is new and
therefore not part of the trusted set $\mathcal{B}$.  Otherwise it contributes to the mean and standard deviation it is
measured against, which inflates AUROC. All reported numbers use that leave-one-out form.

\begin{algorithm}[h]
\small
\caption{SILENT label-free inference}
\label{alg:silent}
\begin{algorithmic}[1]
\Require benign (successful) trajectories $\mathcal{B}$; new trajectory $\tau$; FPR target $\alpha$
\State $\mu,\sigma \gets$ per-dimension mean and standard deviation of step activations over $\mathcal{B}$
\For{each signal $s\in\{\text{internal},\text{tool},\text{text}\}$}
  \State $\bar s_{\text{ben}},\hat\sigma_{\text{ben}} \gets$ mean and standard deviation of $s$ over $\mathcal{B}$
\EndFor
\State $z_t \gets \lVert (h_t-\mu)/\sigma\rVert_2/\sqrt{d}$ for each step $t$ of $\tau$
\State internal $\gets \max_t z_t$; tool, text $\gets$ signals of $\tau$
\State $D(\tau)\gets\sum_s (s-\bar s_{\text{ben}})/\hat\sigma_{\text{ben}}$ \Comment{drop internal if API-only}
\State $\theta\gets (1{-}\alpha)$-quantile of $\{D(b):b\in\mathcal{B}\}$
\State \textbf{flag} if $D(\tau)>\theta$; \textbf{decisive step} $\gets\arg\max_t z_t$
\end{algorithmic}
\end{algorithm}

\end{document}